\newcommand{\eq}[1]{{Eq~(#1)}}
\definecolor{processblue}{cmyk}{0.96,0,0,0}
\newcommand{\RR}{\mathds{R}}
\newcommand{\cA}{{\mathcal{A}}}
\newcommand{\cM}{{\mathcal{M}}} 
\newcommand{\cX}{{\mathcal{X}}} 
\newcommand{\cD}{{\mathds{D}}}
\newcommand{\cS}{{\mathcal{S}}} 
\newcommand{\KL}{{\mathbf{KL}}} 
\newcommand{\cR}{{\mathcal{R}}} 
\newcommand{\cB}{{\mathcal{B}}}
\newcommand{\cE}{{\mathds{E}}}
\newcommand{\mupi}{{\mu^\pi}}
\newcommand{\JIL}{{J_{\text{LfD}}}}
\newcommand{\JLFO}{{J_{\text{LfO}}}}
\newcommand{\JINV}{{J_{\text{Reg}}}}
\newcommand{\JOPOLO}{{J_{\text{opolo}}}}
\newcommand{\ba}{{\bar{a}}}
\newcommand{\bpi}{{\hat{\pi}}}
\newcommand{\ind}{{\mathbbm{1}}}
\newcommand{\JRL}{{J_{\text{RL}}}}
\newcommand{\PI}{{P_{I}}}
\newcommand{\mut}{{\mu^E}}
\newcommand{\muE}{{\mu_E}}
\newcommand{\muR}{{\mu^{R}}} 
\newcommand{\RE}{{\cR_{{E}}}}
\newcommand{\SAIL}{{\textit{SAIL~}}}
\newcommand{\opolo}{{\textit{OPOLO~}}}
\newcommand{\opolox}{\textit{OPOLO-\textit{x}}}
\newcommand{\GAIL}{{\textit{GAIL~}}}
\newcommand{\BCO}{{\textit{BCO~}}}
\newcommand{\GAIfO}{{\textit{GAIfO~}}}
\newcommand{\IDDM}{{\textit{IDDM~}}}
\newcommand{\DAC}{{\textit{DAC~}}}
\newcommand{\DACfO}{{\textit{DACfO~}}}  
\newcommand{\ValueDICE}{{\textit{ValueDICE~}}}
\newcommand{\ValueDICEfO}{{\textit{ValueDICEfO~}}}
\newcommand{\st}{{\textrm{s.t.}}}
\newcommand{\wrt}{{\textrm{w.r.t.}}}
\newcommand{\ie}{{\textrm{i.e.}}}
\newtheorem{theorem}{Theorem}
\newtheorem{assumption}{Assumption}
\newtheorem{corollary}{Corollary}
\newtheorem{lemma}{Lemma}
\title{Off-Policy Imitation Learning from Observations}
\author{%
  Zhuangdi Zhu \\ 
  Michigan State University\\  
  \texttt{zhuzhuan@msu.edu} 
  \And
  Kaixiang Lin \\
  Michigan State University \\ 
  \texttt{linkaixi@msu.edu}  
  \AND
  Bo Dai \\
  Google Research \\ 
  \texttt{bodai@google.com}  
  \And
  Jiayu Zhou \\
  Michigan State University\\ 
  \texttt{jiayuz@msu.edu}  
}
\begin{document}

\maketitle

\begin{abstract}
Learning from Observations (LfO) is a practical reinforcement learning scenario from which many applications can benefit through the reuse of incomplete resources. Compared to conventional imitation learning (IL), LfO is more challenging because of the lack of expert \textit{action} guidance. 
In both conventional IL and LfO, \textit{distribution matching} is at the heart of their foundation. Traditional distribution matching approaches are sample-costly which depend on \textit{on-policy} transitions for policy learning. Towards sample-efficiency, some \textit{off-policy} solutions have been proposed, which, however, either lack comprehensive theoretical justifications or depend on the guidance of expert actions.
In this work, we propose a sample-efficient LfO approach which enables \textit{off-policy} optimization in a \textit{principled} manner. To further accelerate the learning procedure, we regulate the policy update with an inverse action model, which assists distribution matching from the perspective of \textit{mode-covering}. Extensive empirical results on challenging locomotion tasks indicate that our approach is comparable with state-of-the-art in terms of both sample-efficiency and asymptotic performance. 
\end{abstract}


\section{Introduction}
\vskip -0.1in
Imitation Learning (IL) has been widely studied in the reinforcement learning (RL) domain to assist in learning complex tasks by leveraging the experience from expertise~\cite{ross2011reduction,ho2016generative,kostrikov2019imitation,kostrikov2018discriminator,peng2018variational}.  
Unlike conventional RL that depends on environment reward feedbacks, IL can purely learn from expert guidance, and is therefore crucial for realizing robotic intelligence in practical applications, where demonstrations are usually easier to access than a delicate reward function~\cite{silver2017mastering,mnih2015human}.

Classical IL, or more concretely, Learning from Demonstrations (LfD), assumes that both \textit{states} and \textit{actions} are available as expert demonstrations~\cite{abbeel2004apprenticeship,ho2016generative,kostrikov2019imitation}. 
Although expert actions can benefit IL by providing elaborated guidance, requiring such information for IL may not always accord with the real-world.
Actually, collecting demonstrated actions can sometimes be costly or impractical, whereas observations without actions are more accessible resources, such as camera or sensory logs.
Consequently, Learning from Observations (LfO) has been proposed to address the scenario without expert actions~\cite{torabi2018generative,yang2019imitation,torabi2019recent}. 
On one hand, LfO is more challenging compared with conventional IL, due to missing finer-grained guidance from actions. 
On the other hand, LfO is a more practical setting for IL, not only because it capitalizes previously unusable resources, but also because it reveals the potential to realize advanced artificial intelligence. In fact, learning without action guidance is an inherent ability for human being. For instance, a novice game player can improve his skill purely by watching video records of an expert, without knowing what actions have been taken~\cite{aytar2018playing}.

Among popular LfD and LfO approaches, distribution matching has served as a principled solution~\cite{ho2016generative,kostrikov2019imitation,torabi2018generative,yang2019imitation,fu2017learning}, which works by interactively estimating and minimizing the discrepancy between two stationary distributions: one generated by the expert, and the other generated by the learning agent. 
To correctly estimate the distribution discrepancy, traditional approaches require \textit{on-policy} interactions with the environment whenever the agent policy gets updated.
This inefficient sampling strategy impedes wide applications of IL to scenarios where accessing transitions are expensive~\cite{sallab2017deep,michels2005high}. The same challenge is aggravated in LfO, as more explorations by the agent are needed to cope with the lack of action guidance.

Towards sample-efficiency, some off-policy IL solutions have been proposed to leverage transitions cached in a replay buffer.
Mostly designed for LfD,
these methods either lack theoretical guarantee by ignoring a potential distribution drift~\cite{kostrikov2018discriminator,sasaki2019sample,torabi2018behavioral}, 
or hinge on the knowledge of expert actions to enable off-policy distribution matching~\cite{kostrikov2019imitation}, which makes their approach inapplicable to LfO.

To address the aforementioned limitations, in this work, we propose a LfO approach that improves sample-efficiency in a principled manner.
Specifically, we derive an upper-bound of the LfO objective which dispenses with the need of knowing expert actions and can be fully optimized with \textit{off-policy} learning.
To further accelerate the learning procedure, we combine our objective with a regularization term, which is validated to pursue distribution matching between the expert and the agent from a \textit{mode-covering} perspective. 
Under a mild assumption of a deterministic environment, we show that the regularization can be enforced by learning an inverse action model.
We call our approach \textit{OPOLO} (\textit{O}ff \textit{PO}licy \textit{L}earning from \textit{O}bservations). 
Extensive experiments on popular benchmarks show that \opolo achieves state-of-the-art in terms of both asymptotic performance and sample-efficiency.

\section{Background}
We consider learning an agent in an environment of Markov Decision Process (MDP)~\cite{sutton2018reinforcement}, which can be defined as a tuple: $\cM = (\cS, \cA, P, r, \gamma, p_0)$. Particularly, $\cS$ and $\cA$ are the state and action spaces; $P$ is the state transition probability, with $P(s'|s,a)$ indicating the probability of transitioning from $s$ to $s'$ upon action $a$; $r$ is the reward function, with $r(s,a)$ the immediate reward for taking action $a$ on state $s$; Without ambiguity, we consider an MDP with \textit{infinite} horizons, with $0 < \gamma < 1$ as a discounted factor; $p_0$ is the initial state distribution. 
An agent follows its policy $\pi:\cS \to \cA $ to interact with this MDP with an objective of maximizing its expected return: 

\vskip -0.3in
\begin{align*}
   \max \JRL(\pi) := \cE_{s_0 \sim p_0, a_i \sim \pi(\cdot|s_i), s_{i+1} \sim P(\cdot|s_i,a_i), \forall 0 \leq i \leq t }\Big[ \sum_{t=0}^{\infty}  \gamma^t r(s_t, a_t) \Big] = \cE_{(s,a) \sim \mupi(s,a)} \Big[ r(s,a)\Big],
\end{align*} 
\vskip -0.1in
in which $\mupi(s,a)$ is the \textit{stationary state-action distribution} induced by $\pi$, as defined in Table \ref{table:preliminary}. 

\textit{\textbf{Learning from demonstrations}} (LfD) is a problem setting in which an agent is provided with a fixed dataset of expert demonstrations as guidance, without accessing the environment rewards . 
%
The demonstrations $\cR_E$ contain sequences of both \textit{states} and \textit{actions} generated by an expert policy $\pi_E$: $\cR_E = \{(s_0, a_0), (s_1, a_1), \cdots |a_i \sim \pi_E(\cdot|s_i), s_{i+1} \sim P(\cdot|s_i, a_i) \}$. Without ambiguity, we assume that the expert and agent are from the same MDP.

Among LfD approaches, distribution matching has been a popular choice, which minimizes the discrepancy between two stationary \textit{state-action} distributions: one is $\mut(s,a)$ induced by the expert, and the other is $\mupi(s,a)$ induced by the agent. 
Without loss of generality, we consider KL-divergence as the discrepancy measure for distribution matching, although any $f$-divergences can serve as a legitimate choice~\cite{ho2016generative,xiao2019wasserstein,nowozin2016f} :
\begin{align} \label{eq:lfd}
   \min \JIL(\pi) :=  \cD_\KL[\mupi(s,a) || \mut(s,a)].
\end{align}
\vskip-0.1in 
 
\textbf{\textit{Learning from observations}} (LfO) is a more challenging scenario where expert guidance $\cR_E$ contains only \textit{states}.  Accordingly, applying distribution matching to solve LfO yields a different objective that involves \textit{state-transition} distributions~\cite{yang2019imitation,liu2019state,torabi2018generative}:
\begin{small}
\begin{align} \label{eq:lfo}
   \min \JLFO(\pi) := \cD_\KL[\mupi(s,s') || \mut(s,s')].
\end{align}
\vskip -0.1in
\end{small}

There exists a close connection between LfO and LfD objectives.
In particular, the discrepancy between two objectives can be derived precisely as follows (see Sec~\ref{appendix:gap} in the appendix)~\cite{yang2019imitation}:
\begin{small}
   \vskip -0.15in
   \begin{align} \label{eq:gap-KL}
      \cD_\KL[\mupi(a|s,s') || \mut(a|s,s')] = \cD_\KL[\mupi(s,a)||\mut(s,a)] - \cD_\KL[\mupi(s,s') || \mut(s,s')]. 
   \end{align} 
   \vskip -0.1in
\end{small}
 
{\remark \label{remark:gap-property}
In a non-injective MDP, the discrepancy of $\cD_\KL[\mupi(a|s,s') || \mut(a|s,s')]$ cannot be optimized without knowing expert actions. In a deterministic and injective MDP, it satisfies that $\forall~\pi:\cS \to \cA,~\cD_\KL[\mupi(a|s,s') || \mut(a|s,s')] = 0$.   
}  

Despite the potential gap between these two objectives, the LfO objective in \eq{\ref{eq:lfo}} is still intuitive and valid,
as it emphasizes on recovering the expert's influence on the environment by encouraging the agent to yield the desired \textit{state-transitions}, regardless of the immediate behavior that leads to those transitions.
In this work, we follow this rationale and consider \eq{\ref{eq:lfo}} as our learning objective, which has also been widely adopted by prior art~\cite{torabi2018generative,torabi2019adversarial,sun2019provably,brown2019extrapolating}. 
We will show later that pursuing this objective is sufficient to recover expertise for various challenging tasks.
 
A common limitation of existing LfO and LfD approaches relies in their inefficient optimization.
Work along this line usually adopts a GAN-style strategy \cite{goodfellow2014generative} to perform distribution matching. Take the representative work of \GAIL \cite{ho2016generative} as an example, in which a discriminator $x: \cS \times \cA \to \RR$ and a generator $\pi: \cS \to \cA$ are jointly learned to optimize a dual form of the original LfD objective: 

\begin{small}
   \vskip -0.2in
   \begin{align*}  
     \min_{\pi} \max_{x} J_\text{GAIL}(\pi,x) :=  \cE_{\mut(s,a)}[ \log(x(s,a))] +  \cE_{\mupi(s,a)}[\log(1-x(s,a))].  
   \end{align*}
   \vskip -0.1in
\end{small} 

%
During optimization, \textit{on-policy} transitions in the MDP are used to estimate expectations over $\mupi$. 
%
It requires new environment interactions whenever $\pi$ gets updated and is thus sample inefficient.
This inconvenience is echoed in the work of LfO, which inherits the same spirit of on-policy learning~\cite{yang2019imitation,torabi2018generative}. 
In pursuit of sample-efficiency, some off-policy solutions have been proposed.
These methods, however, either lack theoretical guarantee~\cite{torabi2018behavioral,kostrikov2018discriminator}, or rely on the expert actions~\cite{kostrikov2018discriminator,kostrikov2019imitation}, which makes them inapplicable to LfO. We will provide more explanations in Sec~\ref{sec:dice-dilemma} in the appendix.

To improve the sample-efficiency of LfO with a principled solution, in the next section we show how we explicitly introduce an off-policy distribution into the LfO objective, from which we derive a feasible upper-bound that enables \textit{off-policy} optimization without the need of accessing expert actions.
%
\begin{small}  
\begin{table}[tb] 
   \begin{center}  
   \scalebox{0.88}{
   \hskip -0.2in
   \begin{tabular}{cccccc}
   \hline
   \textbf{}  & {\begin{tabular}[c]{@{}c@{}} State Distribution\end{tabular}} & {\begin{tabular}[c]{@{}c@{}} State-Action \\ Distribution\end{tabular}} & \multicolumn{1}{c}{{\begin{tabular}[c]{@{}c@{}} Joint Distribution\end{tabular}}} & {\begin{tabular}[c]{@{}c@{}} Transition \\ Distribution\end{tabular}} & {\begin{tabular}[c]{@{}c@{}} Inverse-Action \\ Distribution\end{tabular}}\\ \hline
   Notation   & $\mupi(s)$  & $\mupi(s,a)$ & $\mupi(s,a,s')$   & $\mupi(s,s')$ & $\mupi(a|s,s')$    \\ \hline
   Support    & $\cS$ & $\cS \times \cA$ & $\cS \times \cA \times \cS$ & $\cS \times \cS$ & $\cA \times \cS \times \cS$ \\ \hline
   Definition & $(1-\gamma)\sum_{t=1}^{\infty}\gamma^t \mu^\pi_t(s)$ & $\mupi(s)\pi(a|s)$  & $\mupi(s,a) P(s'|s,a)$  & $\int_{\cA} \mupi(s,a,s') da$ & $\frac{\mupi(s,a) P(s'|s,a)  }{\mupi(s,s')}$\\ \hline 
   \end{tabular}}
   \vskip 0.1in
   \caption{Summarization on different stationary distributions,  with $\mu^\pi_t(s) =  p (s_t=s| s_0 \sim p_0(\cdot), a_i \sim \pi(\cdot|s_i), s_{i+1} \sim P(\cdot|s_{i},a_i)), ~\forall i < t).$  }
   \label{table:preliminary} 
   \end{center} 
   \vskip -0.2in
   \end{table}
\end{small}
\section{\textit{OPOLO}: Off-Policy Learning from Observations}
\subsection{Surrogate Objective}
\vskip -0.1in
The idea of re-using cached transitions to improve sample-efficiency has been adopted by many RL algorithms~\cite{mnih2015human,haarnoja2018soft,fujimoto2018addressing,lillicrap2015continuous}. 
In the same spirit, we start by introducing an \textit{off-policy} distribution $\muR(s,a)$, which is induced by a dataset $\cR$ of historical transitions.
Choosing KL-divergence as a discrepancy measure, we obtain an upper-bound of the LfO objective by involving $\muR(s,a)$ (see Sec~\ref{appendix:lfo-upperbound} in the appendix for proof):
    \vskip -0.2in
    \begin{align} \label{eq:main}
        \cD_\KL\left[\mupi(s,s') || \mut(s,s')\right]  \leq \cE_{\mupi(s,s')} \left[\log \frac{\muR(s,s')} {\mut(s,s')}\right] + \cD_\KL\left[\mupi(s,a) || \muR(s,a)\right]. 
    \end{align}
    \vskip -0.05in
%
As a result, the LfO objective can be optimized by minimizing the RHS of \eq{\ref{eq:main}}.
Although widely adopted for its interpretability, KL divergence can be tricky to  estimate due to issues of biased gradients~\cite{donsker1975asymptotic,kostrikov2019imitation}. To avoid the potential difficulty in optimization, we further substitute the term $\cD_\KL[\mupi(s,a) || \muR(s,a)]$ in \eq{\ref{eq:main}} by a more aggressive $f$-divergence, with $f(x)=\frac{1}{2}x^2$, which serves as an upper-bound of the KL-divergence (See Sec~\ref{appendix:f-divergence} in the appendix):  
\begin{align} \label{eq:f-divergence} 
\cD_\KL[P||Q] \leq \cD_f[P||Q].
\end{align}
\vskip -0.1in

Our choice of $f$-divergence can be considered as a variant of Pearson $\chi^2$-divergence with a constant shift, which has also been adopted as a valid measure of distribution discrepancies~\cite{nachum2019dualdice,nachum2019algaedice}.
Compared with KL-divergence, this $f$-divergence enables unbiased estimation without deteriorating the optimality, whose advantages will become increasingly visible in Section \ref{sec:off-policy-transform}. 
%

Built upon the above transformations, we reach an objective that serves as an effective upper-bound of $\cD_\KL[\mupi(s,s') || \mut(s,s')]$:
\vskip -0.2in
\begin{align} \label{obj:main}
    \min_{\pi} \JOPOLO(\pi):= \cE_{\mupi(s, s')} \left[ \log \frac {\muR(s,s')} {\mut(s,s')} \right] + \cD_f[\mupi(s,a) ||\muR(s,a)].
\end{align}
\vskip -0.2in


\subsection{Off-Policy Transformation} \label{sec:off-policy-transform}
Optimization \eq{\ref{obj:main}} is still \textit{on-policy} and induces additional challenges through the term $\cD_f[\mupi(s,a)||\muR(s,a)]$. 
However, we show that it can be readily transformed into {off-policy} learning. We first leverage the dual-form of an $f$-divergence~\cite{nguyen2010estimating}:
%
\begin{small}
\begin{align*}
    - \cD_f[\mupi(s,a)||\muR(s,a)]= \inf_{x: S \times A \to R}  \cE_{(s,a)\sim \mupi}[- x(s,a)] +  \cE_{(s,a) \sim \muR}[f_*(x(s,a))] ,
\end{align*}
\vskip -0.1in
\end{small}
and use this dual transformation to rewrite \eq{\ref{obj:main}}: 
  
\begin{small}  
\vskip -0.2in
\begin{align} \label{obj:main-dual} 
        &\min_{\pi} \JOPOLO(\pi) \equiv \max_{\pi}~\cE_{ \mupi(s, s')}\left[ - \log \frac {\muR(s,s')} {\mut(s,s')} \right] - \cD_f[\mupi(s,a) ||\muR(s,a)]  \nonumber\\
        \equiv &\max_\pi \min_{x:\cS \times \cA \to R} \JOPOLO(\pi, x):= \cE_{\mupi(s, a, s')}\left[ \log \frac {\mut(s,s')} {\muR(s,s')} - x(s,a) \right] +  \cE_{\muR(s,a) }[f_*(x(s,a))].
\end{align}
\vskip -0.1in
\end{small}
 
If we consider a synthetic reward as $r(s,a,s')=  \log\frac{\mut(s,s')}{\muR(s,s')}- x(s,a) $, the first term in \eq{\ref{obj:main-dual}} resembles an RL return function: $\hat{J}(\pi)=\cE_{(s,a,s')\sim \mupi(s,a,s')}[r(s,a,s')].$ 
%
%
Observing this similarity, we turn to learning a $Q$-function by applying a  change of variables:
\vskip -0.25in
\begin{align*}
    Q(s,a) = \cE_{s'\sim P(\cdot|s,a), a' \sim \pi(\cdot|s')} \left[- x(s,a)  + \log\frac{\mut(s,s')}{\muR(s,s')}  + \gamma Q(s',a') \right].
\end{align*}
\vskip -0.1in

Equivalently, this $Q$ function is a fixed point of a variant Bellman operator $\cB^\pi Q$: 
\begin{small}
\vskip -0.2in
\begin{align*}
    Q(s,a) = - x(s,a) + ~\cE_{s'\sim P(\cdot|s,a), a' \sim \pi(\cdot|s')} \left[\log\frac{\mut(s,s')}{\muR(s,s')}  + \gamma Q(s',a') \right] =- x(s,a) + \cB^\pi Q(s,a).
\end{align*}
\vskip -0.05in
\end{small}

Rewriting $x(s,a) = (\cB^\pi Q - Q)(s,a)$ and applying it back to \eq{\ref{obj:main-dual}},  we finally remove the \textit{on-policy} expectation by a series of telescoping (see Sec~\ref{Appendix:obj:main-dice} in the appendix for derivation):
\begin{align} \label{obj:main-dice}
    &\max_\pi \min_{x: S \times A \to R} \JOPOLO(\pi, x) \equiv \max_\pi \min_{Q: S \times A \to R} \JOPOLO(\pi, Q) \nonumber \\  
    &:= \cE_{(s, a, s')\sim \mupi(s, a, s')}[ \log\frac{\mut(s,s')}{\muR(s,s')} - (\cB^\pi Q - Q)(s,a)] +  \cE_{(s,a) \sim \muR(s,a)}[f_*( (\cB^\pi Q - Q)(s,a))] \nonumber \\
    &~= (1 - \gamma) \cE_{s_0 \sim p_0, a_0 \sim \pi(\cdot|s_0) }[Q(s_0, a_0)] +  \cE_{(s,a) \sim \muR(s,a)}[f_*( (\cB^\pi Q-Q)(s,a))]. 
\end{align}
 
%
A similar rationale has also been the key component of \textit{distribution error correction} (DICE) \cite{nachum2019dualdice,nachum2019algaedice,zhang2020gendice}.
Based on the above transformation, we propose our main objective: 
\vskip -0.1in
\begin{small}
    \begin{align} \label{obj:off-policy}
        \max_\pi \min_{Q: S \times A \to R} \JOPOLO(\pi, Q) := (1 - \gamma) \cE_{s_0 \sim p_0, a_0 \sim \pi(\cdot|s_0) }[Q(s_0, a_0)] +  \cE_{\muR(s,a)}[f_*( (\cB^\pi Q-Q)(s,a))]. 
    \end{align}
    \vskip -0.2in
\end{small}

Specifically, when $f(x)=f^*(x)=\frac{1}{2}x^2$, the second term $\cE_{\muR(s,a)}[f_*((\cB^\pi Q-Q)(s,a))]$ is reminiscent of an Bellman error, for which we can have unbiased estimation by mini-batch gradients. 

Given access to the \textit{off-policy} distribution $\muR(s,a)$ and the initial distribution $p_0$, optimization~\eqref{obj:off-policy} can be efficiently realized once we resolve the term $\log \frac{\mut(s,s')}{\muR(s,s')}$ contained in $\cB^\pi Q(s,a)$.  

\subsection{Adversarial Training with Off-Policy Experience} \label{sec:offpolicy-training}
\vskip -0.1in
%
%
We can take the advantage of GAN training \cite{goodfellow2014generative} to 
estimate the term $\log \frac{\mut(s,s')}{\muR(s,s')}$ inside $\cB^\pi Q(s,a)$, by learning a discriminator $D$:
\vskip -0.2in
\begin{align*} 
    \max_{D: \cS \times \cS \to \RR} \cE_{(s,s') \sim \mut(s,s')}\Big[\log(D(s,s'))\Big] + \cE_{(s,s') \sim \muR(s,s')}\Big[\log(1-D(s,s'))\Big],
\end{align*}
\vskip -0.15in
%
which upon training to optimality, satisfies $ \log (\frac{\muE(s,s')}{\muR(s,s')}) = \log D^*(s,s') - \log(1-D^*(s,s'))$.
Unlike prior art~\cite{ho2016generative,torabi2018generative,kostrikov2018discriminator} that requires estimating the ratio of $\log\frac{\mut}{\mupi}$, the discriminator in our case is designed to be off-policy in accordance with our proposed objective.
Up to this step, optimization \eqref{obj:off-policy} can be achieved by interactively optimizing $Q$, $\pi$, and $D$ with pure off-policy learning.
\subsection{Policy Regularization as Forward Distribution Matching}  \label{sec:inverse-action}
\vskip -0.1in
Optimization~\ref{obj:off-policy} essentially minimizes an upper-bound of the \textit{inverse} KL divergence $\cD_\KL[\mupi(s,s')||\mut(s,s')]$, which is known to encourage a \textit{mode-seeking} behavior \cite{ghasemipour2019understanding}.
Although {mode-seeking} is more robust to covariate-drift than \textit{mode-covering} (such as behavior cloning), it requires sufficient explorations to find a reasonable state-distribution, especially at early learning stages.
On the other hand, a mode-covering strategy has merits in quickly minimizing discrepancies on the \textit{expert} distribution, by optimizing a \textit{forward} KL-divergence such as $\cD_\KL[\pi_E(a|s)||\pi(a|s)]$.

To combine the advantages of both, in this section we show how we further speed up the learning procedure from a \textit{mode-covering} perspective, without deteriorating the efficacy of our main objective.
To achieve this goal, we first derive an optimizable lower-bound from a \textit{mode-covering} objective:
\begin{small}
    \vskip -0.2in
\begin{align} \label{lemma:bc-lowerbound}
   \cD_\KL[\pi_E(a|s)||\pi(a|s)] = \cD_\KL[\mut(s'|s)||\mupi(s'|s)] +  \cD_\KL[\mut(a|s,s')||\mupi(a|s,s')],
\end{align}
\end{small} 
in which we define $\mupi(s’|s)= \int_{\cA} \pi(a|s) P(s'|s,a) da $ as the \textit{conditional} state transition distribution induced by $\pi$, likewise for $\mut(s'|s)$ (see Sec~\ref{appendix:bc-lowerbound} in the appendix).

Similar to Remark \ref{remark:gap-property}, the discrepancy $\cD_\KL[\mut(a|s,s')||\mupi(a|s,s')]$ is not optimizable without knowing expert actions. 
However, under some mild assumptions, we found it feasible to optimize the other term $\cD_\KL[\mut(s'|s)||\mupi(s'|s)]$ by enforcing a policy regularization:

{\remark \label{remark:inverse-constraint}
In a deterministic MDP, assuming the support of $\mut(s, s')$ is covered by $\muR(s,s')$, $\st ~\mut(s,s') > 0 \Longrightarrow \muR(s,s') > 0,$ then regulating policy using $\muR(\cdot|s,s')$ minimizes $\cD_\KL[\mut(s'|s) || \mupi(s'|s)]$ (See Sec~\ref{appendix:inverse-proof} in supplementary for a detailed discussion):
\begin{small}
$$\exists \tilde{\pi}:\cS \to \cA, ~\st~\forall (s, s') \sim \mut(s,s'),~\tilde{\pi}(\cdot|s) \propto \muR(\cdot|s,s') \Longrightarrow \tilde{\pi} = \arg \min_\pi \cD_\KL[\mut(s'|s) || \mupi(s'|s)].$$
\vskip -0.1in
\end{small}
}
Intuitively, when expert labels are unavailable, this regularization can be considered as performing \textit{states matching}, by encouraging the policy to yield actions that lead to desired footprints.
Given a transition $s \to s'$ from the \textit{expert observations}, a conditional distribution $\muR(\cdot|s,s')$ only has \textit{support} on actions that yield this transition $s \to s'$.
Therefore, following this regularization avoids the policy from drifting to undesired states.

In practice, we can estimate $\muR(\cdot|s,s')$ by learning an inverse action model $\PI$ using off-policy transitions from $\muR(s,a,s')$ to optimize the following (See Sec~\ref{appendix:inverse-train} in the appendix):  
\begin{small}
\vskip -0.2 in 
\begin{align} \label{eq:learn-inverse}
    \max_{\PI:\cS \times \cS \to \cA} -\cD_\KL[\muR(a|s,s')||\PI(a|s,s')] \equiv \max_{\PI:\cS \times \cS \to \cA}  \cE_{(s,a,s') \sim \muR(s,a,s')}[\log \PI(a|s,s')].
\end{align}
\vskip -0.2in
\end{small}  
\vskip -0.1in
\subsection{Algorithm}
\vskip -0.1in
Based on all the abovementioned building blocks, we now introduce \opolo in Algorithm \ref{alg:opolo}.  
\opolo involves learning a policy $\pi$, a critic $Q$, a discriminator $D$, and an inverse action regularizer $\PI$, all of which can be done through off-policy training.

In particular, $\pi$ and $Q$ is jointly learned to find a saddle-point solution to optimization~\eqref{obj:off-policy}.
%
The discriminator $D$ assists this process by estimating a density ratio $\log \frac{\mut(s,s')}{\muR(s,s')}$. 
For better empirical performance, we adopt $-\log (1-D(s,s'))$ as the discriminator's output, which corresponds to a constant shift inside the logarithm term, in that $\log(\frac{\mut(s,s')}{\muR(s,s')}+1) = -\log (1-D^*(s,s'))$.  
The inverse action model $\PI$ serves as a regularizer to infer proper actions on the \textit{expert observation distribution} to encourage \textit{mode-covering} .
 %
We defer more implementation details to Sec~\ref{assume:algo} in the appendix.

\vskip -0.1in
\begin{small}
 \begin{algorithm*}[tb] 
   \caption{\textit{O}ff-\textit{PO}licy \textit{L}earning from \textit{O}bservations (\textit{OPOLO})}
   \label{alg:opolo}
\begin{algorithmic}
   \STATE {\bfseries Input:} expert observations $\RE$, off-policy-transitions $\cR$, initial states $\cS_0$, $f$-  function,
   \STATE{~~~~~~~~~~~~policy $\pi_\theta$, critic $Q_\phi$, discriminator $D_w$, inverse action model $\PI_\varphi$,  learning rate $\alpha$.}
   \FOR{$n=1$, $\dots$} 
   \STATE {sample trajectory $\tau \sim \pi_\theta$, $\cR \leftarrow \cR \cup \tau$  }
   \STATE update $D_w$: \hskip 0.1in $w \leftarrow w + \alpha \hat{\cE}_{(s,s') \sim \cR_E}[\triangledown_w \log(D_w(s, s')] + \hat{\cE}_{(s,s') \sim \cR}[\triangledown_w \log(1 - D_w(s, s'))].$
   \vskip -0.1in
   \STATE \hskip 1.3in  set ~$r(s,s') = -\log(1 - D_w(s,s'))$.
   \STATE update $\PI_\varphi$:\hskip 0.3in $ \varphi \leftarrow \varphi + \alpha \hat{\cE}_{(s,a,s') \sim \cR}[\triangledown_\varphi \log(\PI_\varphi(a|s,s'))].$
    
   \STATE update $\pi_\theta$ and $Q_\phi$ : 
   {\small
\STATE \begin{small}$J(\pi_\theta,Q_\phi) = (1-\gamma) \hat{\cE}_{s \sim \cS_0} [Q_\phi(s,\pi_\theta(s))] +  \hat{\cE}_{(s,a,s')\sim \cR}\Big[f^*\Big(r(s,s') + \gamma Q_\phi(s', \pi_\theta(s')) - Q_\phi(s, a) \Big) \Big].$ \end{small}}
   \STATE $\JINV(\pi_\theta) =  \cE_{(s,s') \sim \cR_E, a \sim \PI_\varphi(\cdot|s,s')}[ \log \pi_\theta(a |s)  ].$ 
   \STATE \hskip 1in $ \phi \leftarrow \phi - \alpha J_{\triangledown \phi} (\pi_\theta,Q_\phi);$ ~~~~$ \theta \leftarrow \theta + \alpha \big( J_{\triangledown \theta} (\pi_\theta,Q_\phi) + J_{\triangledown \theta} \JINV(\pi_\theta) \big).$ 
   \ENDFOR  
\end{algorithmic}
\end{algorithm*} 
\end{small}  
   

\vskip -0.15in
\section{Related Work}
\vskip -0.15in
Recent development on imitation learning can be divided into two categories:

\textit{\textbf{Learning from Demonstrations}} (LfD) traces back to behavior cloning (\textit{\textbf{BC}}) \cite{pomerleau1991efficient}, in which a policy is pre-trained to minimize the prediction error on expert demonstrations. This approach is inherent with issues such as distribution shift and regret propagations. 
To address these limitations, \cite{ross2011reduction} proposed a no-regret IL approach called \textit{DAgger}, which however requires online access to oracle corrections.
More recent LfD approaches favor Inverse reinforcement learning (\textit{\textbf{IRL}}) \cite{abbeel2004apprenticeship}, which work by seeking a reward function that guarantees the superiority of expert demonstrations, based on which regular RL algorithms can be used to learn a policy \cite{syed2008game,ziebart2008maximum}. 
A representative instantiation of IRL is Generative Adversarial Imitation Learning (\textit{GAIL}) \cite{ho2016generative}. It defines IL as a distribution matching problem and leverages the GAN technique \cite{goodfellow2014generative} to minimize the  Jensen-Shannon divergence between distributions induced by the expert and the learning policy. 
The success of \GAIL has inspired many other related work, including adopting different RL frameworks \cite{kostrikov2018discriminator}, or choosing different divergence measures \cite{fu2017learning,peng2018variational,arjovsky2017wasserstein} to enhance the effectiveness of imitation learning. 
Most work along this line focuses on \textit{on-policy} learning, which is a sample-costly strategy.

As an \textit{off-policy} extension of \GAIL, \DAC \cite{kostrikov2018discriminator} improves the sample-efficiency by re-using previous samples stored in a relay buffer rather than on-policy transitions. Similar ideas of reusing cached transitions can be found in \cite{sasaki2019sample}. One limitation of these approaches is that they neglected the discrepancy induced when replacing the on-policy distribution with off-policy approximations, which results in a deviation from their proposed objective. 
Another off-policy imitation learning approach is ValueDICE \cite{kostrikov2019imitation}, which inherits the idea of DICE \cite{nachum2019dualdice} to transform an on-policy LfD objective to an off-policy one. This approach, however, requires the information of expert actions, which otherwise makes off-policy estimation unreachable in a model-free setting. Therefore, their approach is not directly applicable to LfO. We have analyzed this dilemma in Sec~\ref{sec:dice-dilemma} in the appendix.

\textit{\textbf{Learning from Observations}} (LfO) tackles a more challenging scenario where expert actions are unavailable. Work alone this line falls into \textit{model-free} and \textit{model-based} approaches. 
\GAIfO \cite{torabi2018generative} is a model-free solution which applies the principle of \GAIL to learn a discriminator with state-only inputs. \IDDM \cite{yang2019imitation} further analyzed the theoretical gap between the LfD and LfO objectives, and proved that a lower-bound of this gap can be somewhat alleviated by maximizing the mutual-information between $(s, (a,s'))$, given an on-policy distribution $\mupi(s,a,s')$. 
Its performance is comparable to \textit{GAIL}. 
\cite{brown2019extrapolating} assumed that the given observation sequences are ranked by superiority, based on which a reward function is designed for policy learning. Similar to \textit{GAIL}, the sample efficiency of these approaches is suboptimal due to their \textit{on-policy} strategy. 


Model-based LfO can be further organized into learning a \textit{forward} \cite{sun2019provably,edwards2018imitating} dynamics model or an \textit{inverse} action model \cite{torabi2018behavioral,liu2019state}.
Especially, \cite{sun2019provably} proposed a forward model solution to learn time-dependent policies for finite-horizon tasks, in which the number of policies to be learned equals the number of transition steps. This approach may not be suitable for tasks with long or infinite horizons.
Behavior cloning from observations (\textit{BCO})~\cite{torabi2018behavioral} learns an inverse model to infer actions missing from the expert dataset, after which behavior cloning is applied to learn a policy. Besides the common issues faced by BC, this strategy does not guarantee that the ground-truth expert actions can be recovered, unless is a deterministic and injective MDP is assumed.
Some other recent work focused on different problem settings than ours, in which the expert observations are collected with different transition dynamics \cite{gangwani2020state} or from different viewpoints \cite{liu2019state,liu2018imitation,stadie2017third}. Readers are referred to \cite{torabi2019recent} for further discussions of LfO.


\section{Experiments}
\vskip -0.15in
We compare \opolo against state-of-the-art LfD and LfO approaches on MuJuCo benchmarks, which are locomotion tasks in continuous state-action space. In accordance with our assumption in Sec~\ref{sec:inverse-action}, these tasks have \textit{deterministic} dynamics. Original rewards are removed from all benchmarks to fit into an IL scenario.
For each task, we collect 4 trajectories from a pre-trained expert policy. All illustrated results are evaluated across 5 random seeds. 

\textbf{Baselines}:
We compared \SAIL against 7 baselines.
We first selected 5 representative approaches from prior work: \GAIL (on-policy LfD), \DAC (off-policy LfD), \ValueDICE (off-policy LfD), \GAIfO (on-policy LfO), and \BCO (off-policy LfO). 
We further designed two strong \textit{off-policy} approaches, Specifically, we built \textit{DACfO}, which is a variation of \DAC that learns the discriminator on $(s,s')$ instead of $(s,a)$, and \textit{ValueDICEfO}, which is built based on \textit{ValueDICE}. Instead of using ground-truth expert actions, \textit{ValueDICEfO} learns an inverse model by optimizing \eq{\ref{eq:learn-inverse}}, and uses the approximated actions generated by the inverse model to fit an LfO problem setting. 
To the best of our knowledge, \textit{DACfO} and \textit{ValueDICEfO} have not been investigated by any prior art. 
Among these baselines, \textit{GAIL}, \textit{DAC}, and \ValueDICE  are provided with both expert \textit{states} and \textit{actions}, while all other approaches only have access to expert \textit{states}.
More experimental details can be found in the supplementary material.

Our experiments focus on answering the following important questions:
\vspace{-0.1in}
\begin{enumerate}[itemsep=0mm] 
\item \textit{Asymptotic performance}: Is \opolo able to achieve expert-level performance given a limited number of expert observations? 

\item \textit{Sample efficiency}: Can \opolo recover expert policy using less interactions with the environment, compared with the state-of-the-art?

\item \textit{Effects of the inverse action regularization}: Does the inverse action regularization useful in speeding up the imitation learning process? \label{q:inverse-model}  

\item \textit{Sensitivity of the choice of $f$-divergence}: Can \opolo perform well given different $f$ functions?  
\end{enumerate} 

\subsection{Performance Comparison}
\vskip -0.1in
\opolo can recover expert performance given a fixed budget of expert observations.
As shown in Figure \ref{fig:ep4-comparison}, \opolo reaches (near) optimal performance in all benchmarks. 
For simpler tasks such as \textit{Swimmer} and \textit{InvertedPendulum}, most baselines can successfully recover expertise.
For other complex tasks with high state-action space, on-policy baselines, such as \GAIL and \textit{GAIfO}, are struggling to reach their asymptotic performance within a limited number of interactions,
As shown in Figure~\ref{fig:ep4-stronger}, the off-policy baseline \BCO is prone to sub-optimality due to its behavior cloning-like strategy,
On the other hand, the performance of \ValueDICEfO can be deteriorated by potential action-drifts, as the inferred actions are not guaranteed to recover expertise. 
For fair comparison, performance of all \textit{off-policy} approaches are summarized in Table~\ref{table:offpolicy-comparison} given a fixed number of interaction steps.

The asymptotic performance of \opolo is 1) superior to \DACfO and \textit{ValueDICEfO}, 2) comparable to \textit{DAC}, and 3) is more robust against overfitting compared with \textit{ValueDICE}, whereas both \DAC and \ValueDICE enjoy the advantage of off-policy learning and extra action guidance.

\begin{small} 
\begin{table}[htb!]
    \begin{center}
    \scalebox{0.9}{
    \hskip -0.1 in
    \begin{tabular}{p{1.8cm}p{2.2cm}p{2.1cm}p{2.2cm}p{1.9cm}p{2.1cm}p{2.3cm}}
    \hline
    \textbf{Env} 
    & \textbf{HalfCheetah} 
    & \textbf{Hopper} 
    & \multicolumn{1}{c}{\textbf{Walker}} 
    & \multicolumn{1}{c}{\textbf{Swimmer}} 
    & \textbf{Ant} \\ \hline 
    \BCO 
    & 3881.10$\pm$938.81 
    & 1845.66$\pm$628.41 
    & 421.24$\pm$135.18 
    & 256.88$\pm$4.52 
    & 1529.54$\pm$980.86 
   \\ \hline 
    \opolox    
    & \textbf{7632.80$\pm$128.88}  
    & {3581.85$\pm$19.08} 
    & 3947.72$\pm$97.88  
    & 246.62$\pm$1.56  
    &  5112.04$\pm$321.42   
    \\ \hline
    \opolo    
    & {7336.96}$\pm${117.89} 
    &  {3517.39}$\pm${25.16}    
    & {3803.00}$\pm${979.85}    
    & {257.38$\pm$4.28}  
    & \textbf{5783.57}$\pm$\textbf{651.98}  
    \\ \hline
    \DAC    
    & 6900.00$\pm$131.24 
    &  {3534.42$\pm$10.27}   
    &  \textbf{4131.05$\pm$174.13}    
    & 232.12$\pm$2.04  
    &5424.28$\pm$594.82 
    \\ \hline
    \DACfO     
    & 7035.63$\pm$444.14 
    &  {3522.95$\pm$93.15}    
    &  {3033.02$\pm$207.63}   
    & 185.28$\pm$2.67   
    &  {4920.76$\pm$872.66} 
    \\ \hline   
    \ValueDICE    
    & 5696.94$\pm$2116.94 
    &   \textbf{3591.37$\pm$8.60}		
    &   1641.58$\pm$1230.73  
    &    {262.73$\pm$7.76}    
    & 3486.87$\pm$1232.25 
    \\ \hline  
    \ValueDICEfO      
    & 4770.37$\pm$644.49 
    &  {3579.51$\pm$10.23}    
    &  431.00$\pm$140.87  
    & \textbf{265.05$\pm$3.45} 
    & {75.08$\pm$400.87} 
    \\ \hline  
    Expert    
    & 7561.78$\pm$181.41 
    &  3589.88$\pm$2.43 
    & 3752.67$\pm$192.80    
    & 259.52$\pm$1.92  
    & 5544.65$\pm$76.11 
    \\ \hline
    $(\cS,\cA)$    
    & $(17,6)$ 
    & $(11,3)$ 
    & $(17,6))$    
    & $(8,2)$  
    & $(111,8)$ 
    \\ \hline
    \end{tabular}
    } 
    \end{center}
    \caption{Evaluated performance of \textit{off-policy} approaches. Results are averaged over 50 trajectories. }
    \label{table:offpolicy-comparison}
    \vskip -0.2in
    \end{table} 
\end{small}

\subsection{Sample Efficiency}
\vskip -0.1in
\opolo is comparable with and sometimes superior to \textit{DAC} in all evaluated tasks, and is much more sample-efficient than \textit{on-policy} baselines. 
As shown in Figure~\ref{fig:ep4-comparison}, the sample-efficiency of \opolo is emphasized by benchmarks with high state-action dimensions. 
%
In particular, for tasks such as \textit{Ant} or \textit{HalfCheetah}, the performance curves of \textit{on-policy} baselines are barely improved at early learning stages. One intuition is that they need more explorations to build the current support of the learning policy, which cannot benefit from cached transitions.
For these challenging tasks, \opolo is even more sample-efficient than \textit{DAC} that has the guidance of expert actions. We ascribe this improvement to the \textit{mode-covering} regularization of \textit{OPOLO} enforced by its inverse action model, whose effect will be further analyzed in Sec~\ref{sec:ab-study}.
Meanwhile, other off-policy approaches such as \textit{BCO} and \textit{ValueDICEfO}, are prone to overfitting and performance degradation (as shown in Figure~\ref{fig:ep4-stronger}), which indicates that the effect of the inverse model alone is not sufficient to recover expertise.
On the other hand, the \textit{ValueDICE} algorithm, although being sample-efficient, is not designed to address LfO and requires expert actions.

\vskip -0.1in
\begin{figure*}[ht!]  
    \begin{center}
        \hskip -0.4in 
        \begin{minipage}[b]{.33\textwidth}
            \centerline{\includegraphics[width=\columnwidth]{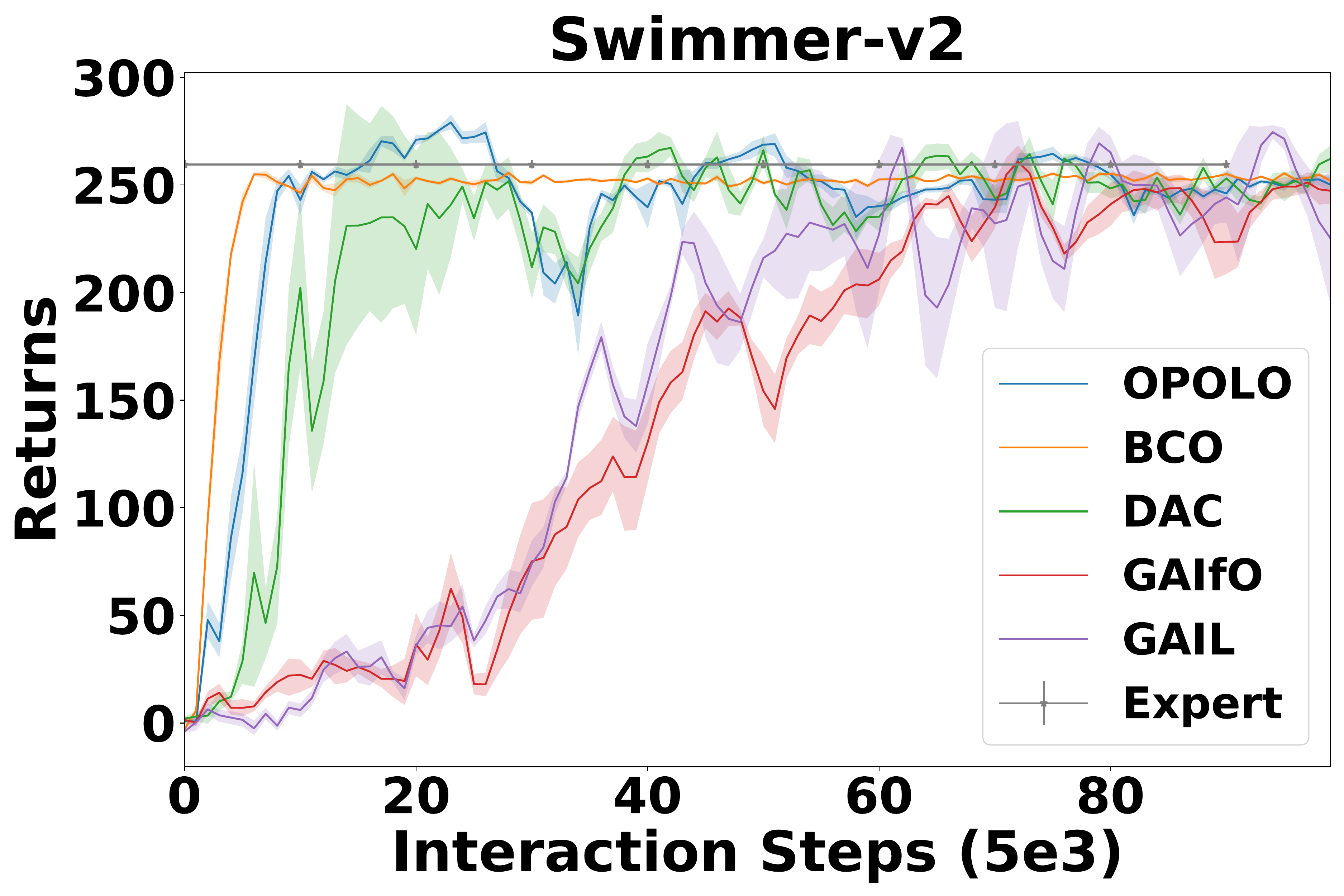}}  
        \end{minipage} 
        \hskip -0.05in 
        \begin{minipage}[b]{.33\textwidth} 
            \centerline{\includegraphics[width=\columnwidth]{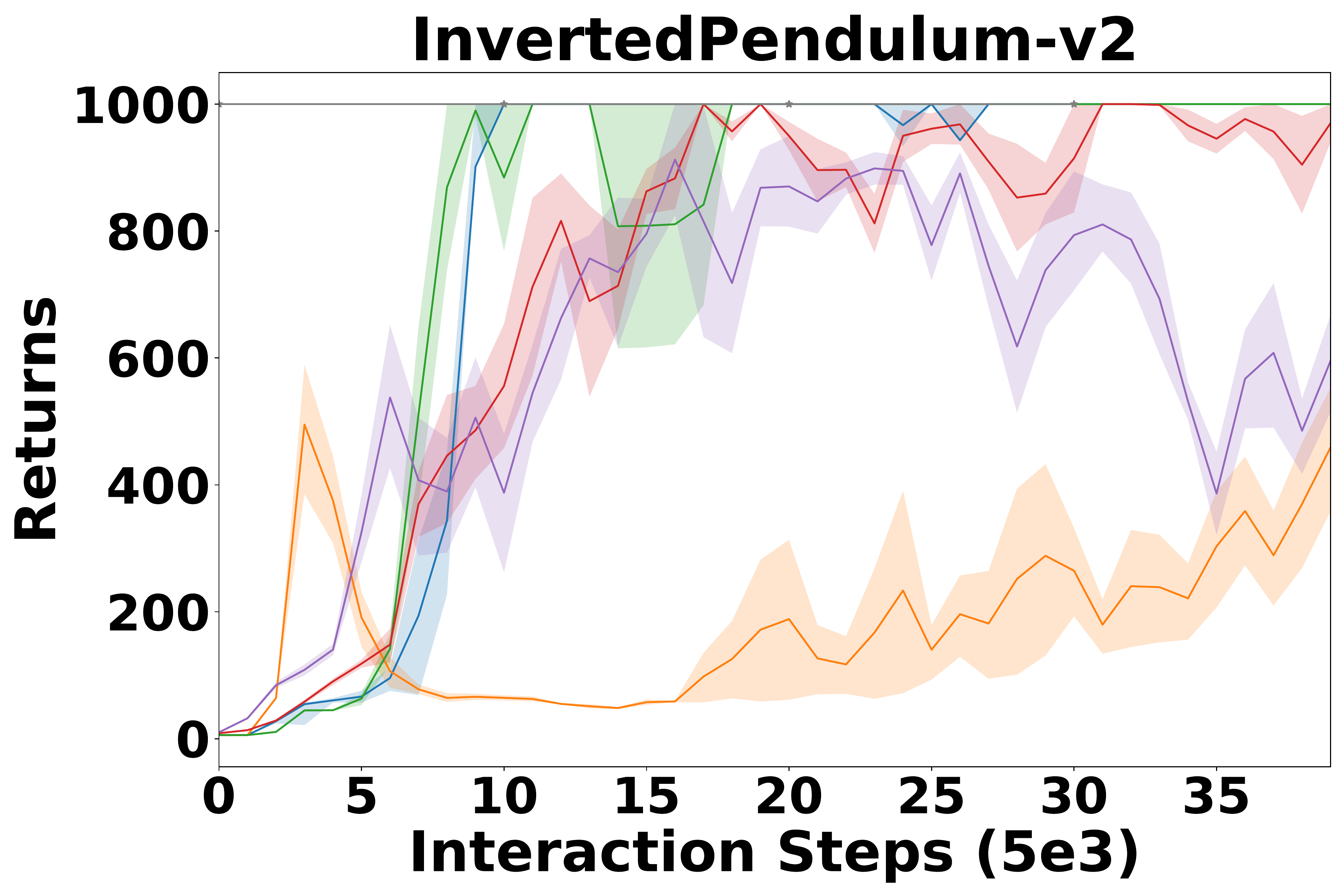}} 
        \end{minipage}  
        \begin{minipage}[b]{.33\textwidth} 
            \centerline{\includegraphics[width=\columnwidth]{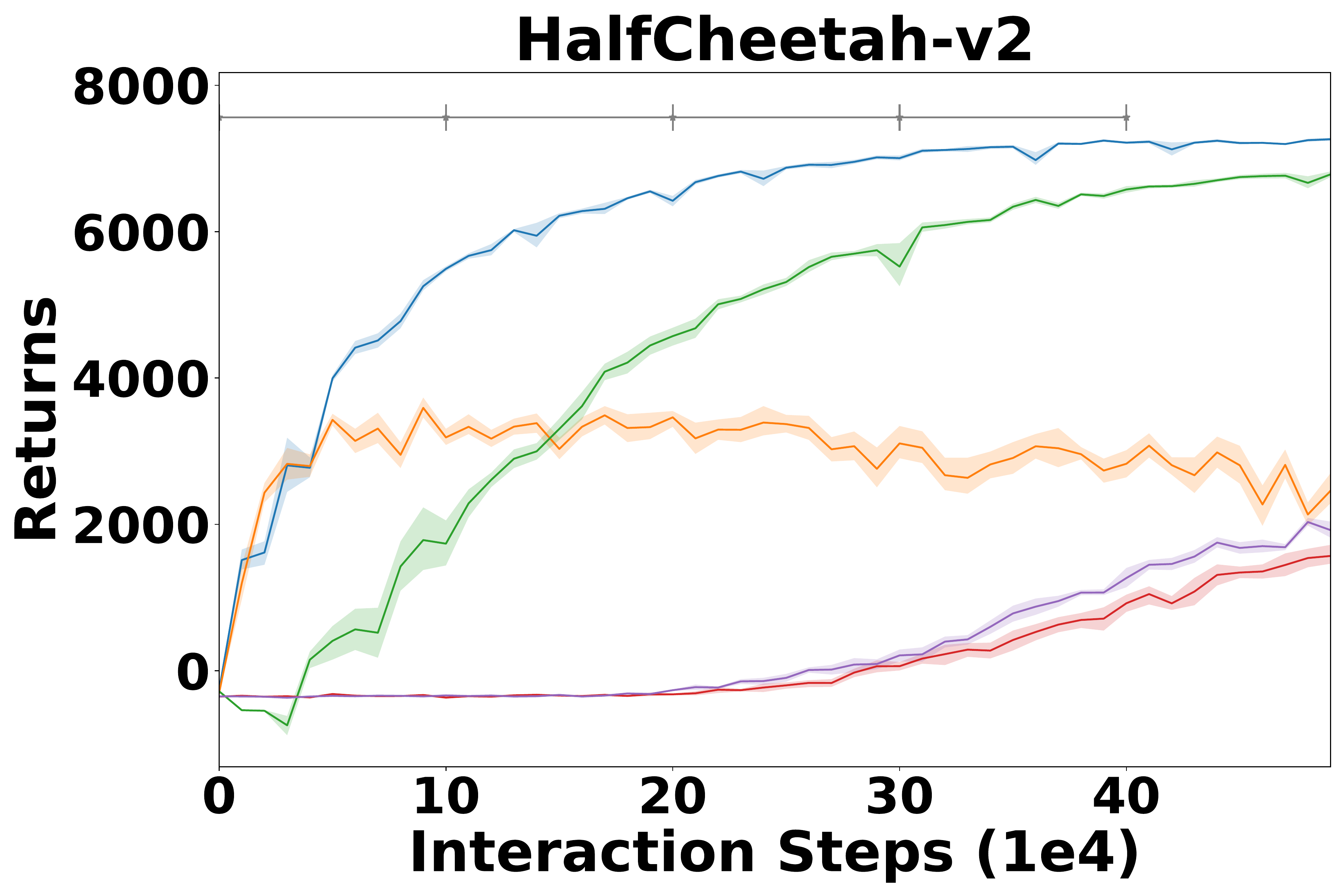}} 
        \end{minipage} 
        
        \hskip -0.3in 
        \begin{minipage}[b]{.35\columnwidth} 
            \centerline{\includegraphics[width=\columnwidth]{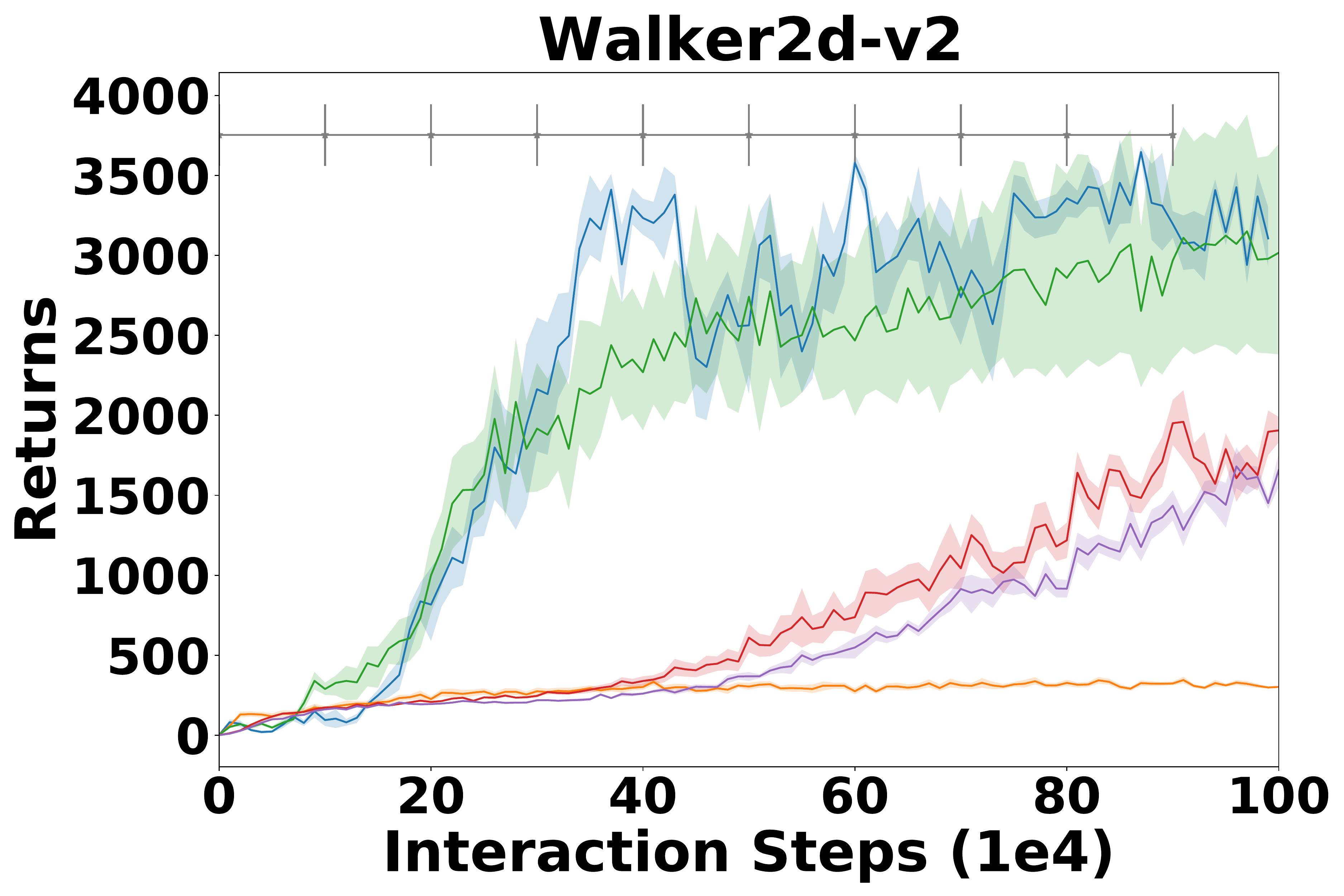}} 
        \end{minipage} 
        \hskip -0.05in 
        \begin{minipage}[b]{.34\columnwidth} 
            \centerline{\includegraphics[width=\columnwidth]{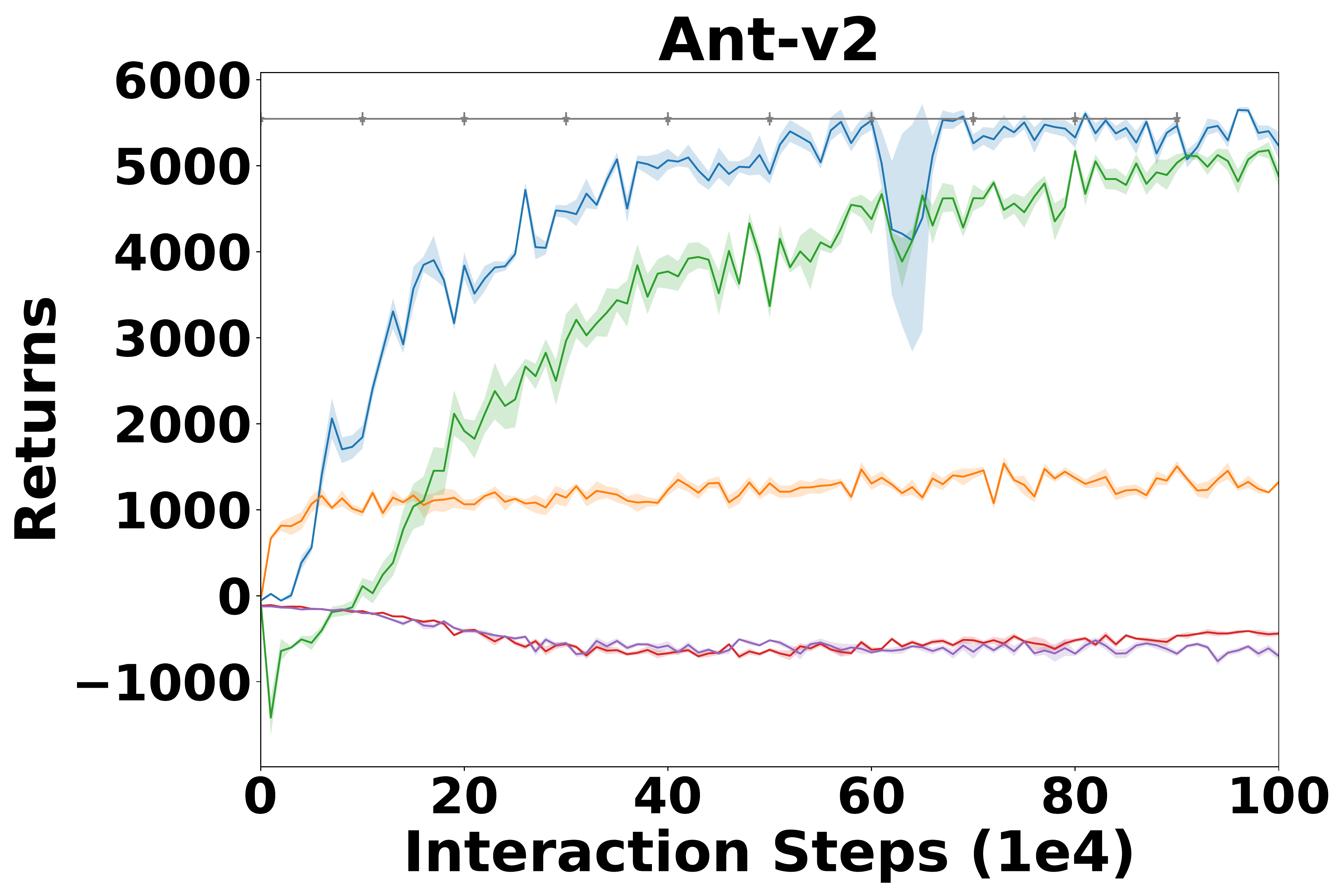}} 
        \end{minipage} 
        \hskip -0.05in 
        \begin{minipage}[b]{.34\columnwidth}
            \centerline{\includegraphics[width=\columnwidth]{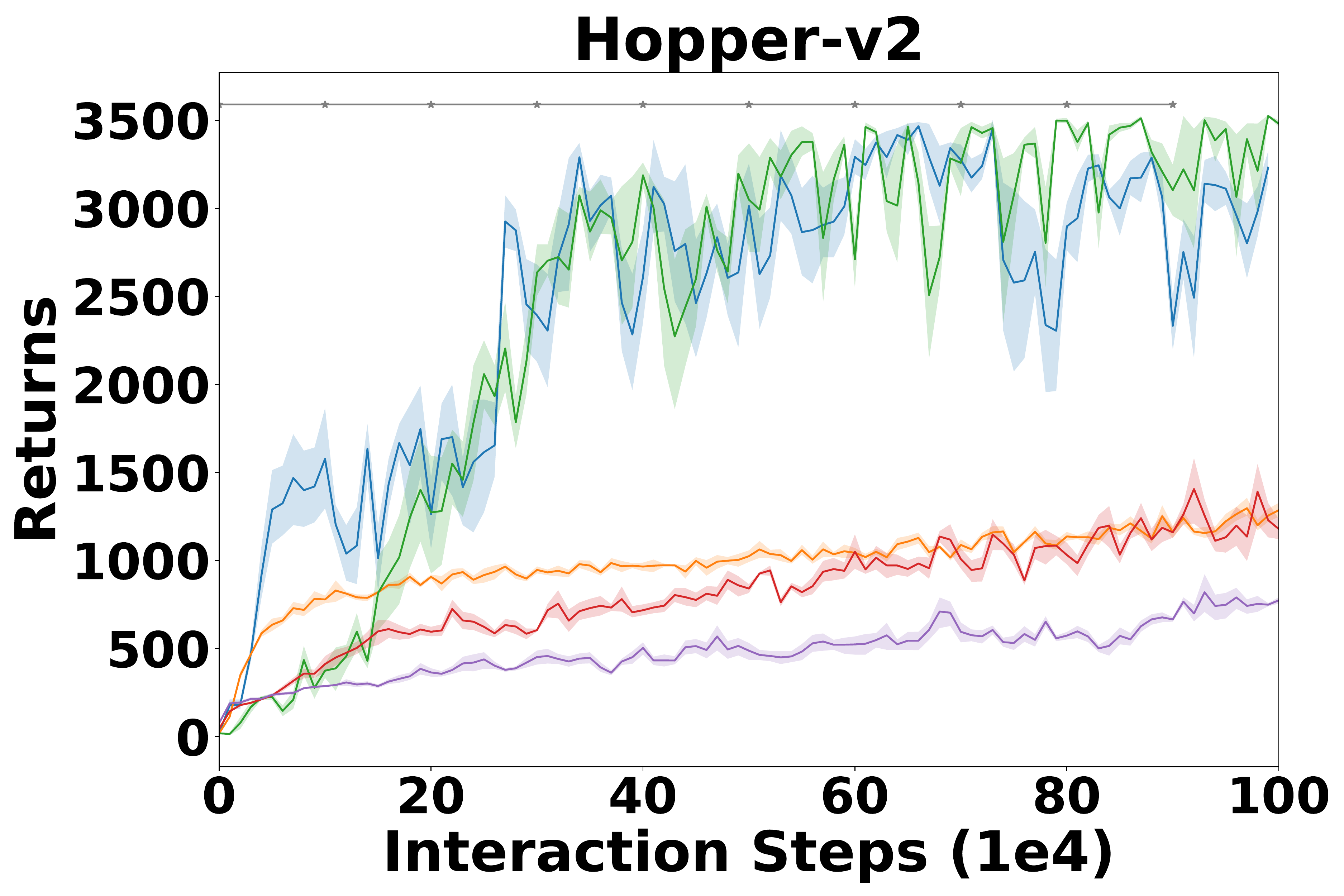}} 
        \end{minipage} 
        \vspace{-0.1in}
    \caption{Interaction steps ($x$-axis) versus learning performance ($y$-axis). Compared with \textit{GAIL}, \textit{BCO}, \textit{GAIfO}, and \textit{DAC}, our proposed approach (\textit{OPOLO}) is the most sample-efficient to reach expert-level performance (Grey horizontal line).}
    \label{fig:ep4-comparison}
    \end{center}
 \end{figure*}

 \vskip -0.2in
 \begin{figure*}[ht]  
     \begin{center}
         \hskip -0.3in 
         \begin{minipage}[b]{.33\columnwidth}
             \centerline{\includegraphics[width=\columnwidth]{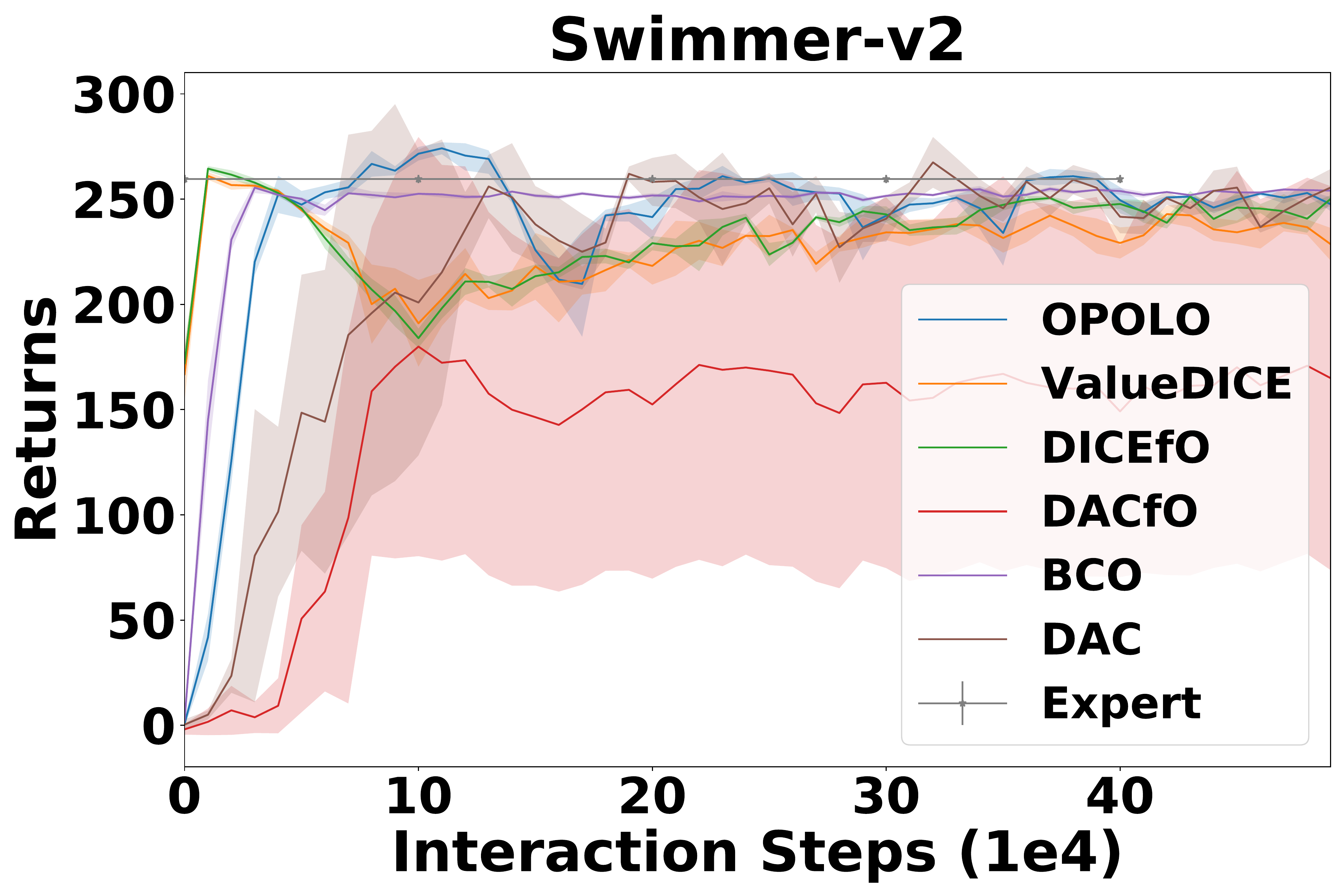}}  
         \end{minipage} 
         \begin{minipage}[b]{.33\columnwidth}
             \centerline{\includegraphics[width=\columnwidth]{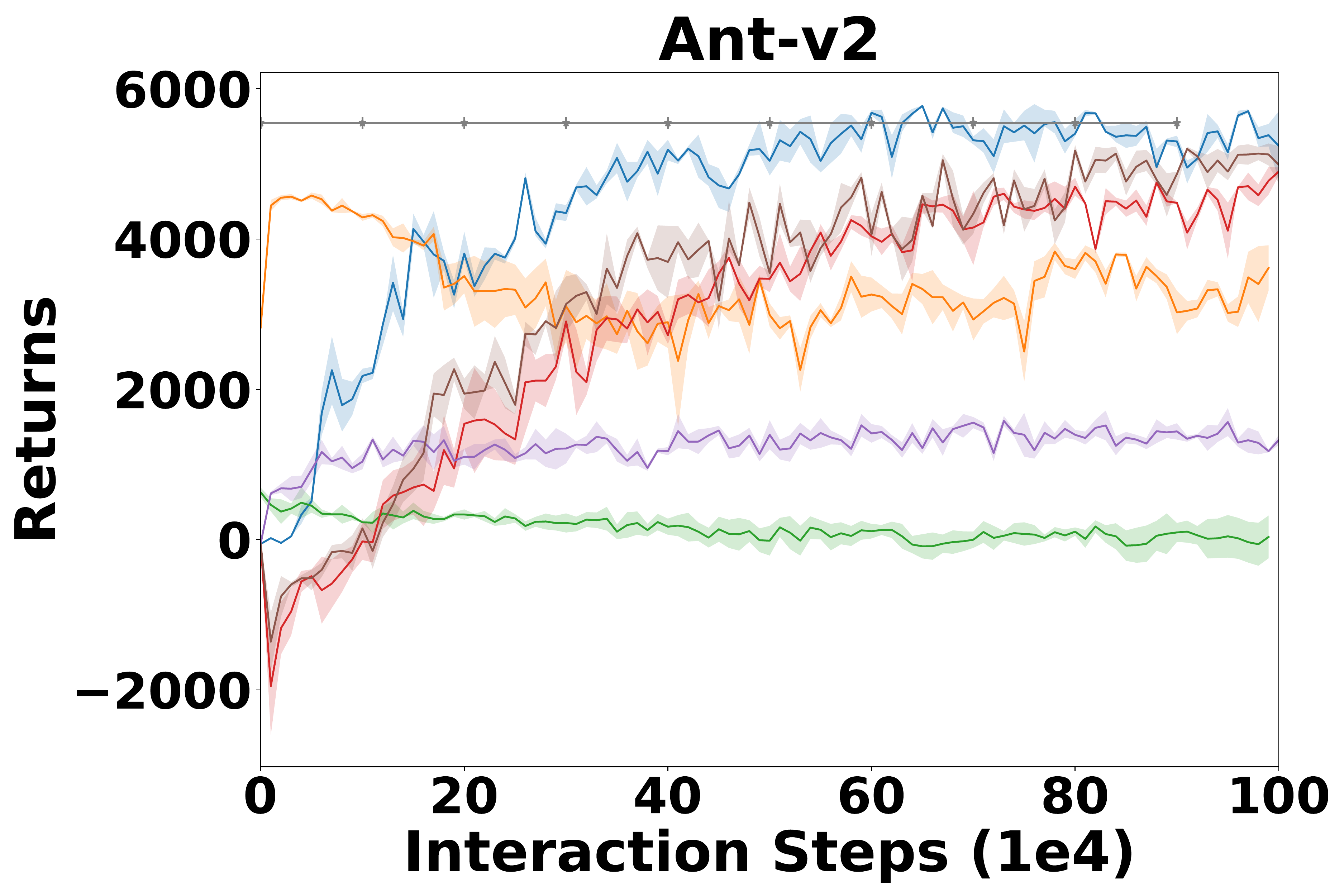}} 
         \end{minipage} 
         \begin{minipage}[b]{.33\columnwidth}
             \centerline{\includegraphics[width=\columnwidth]{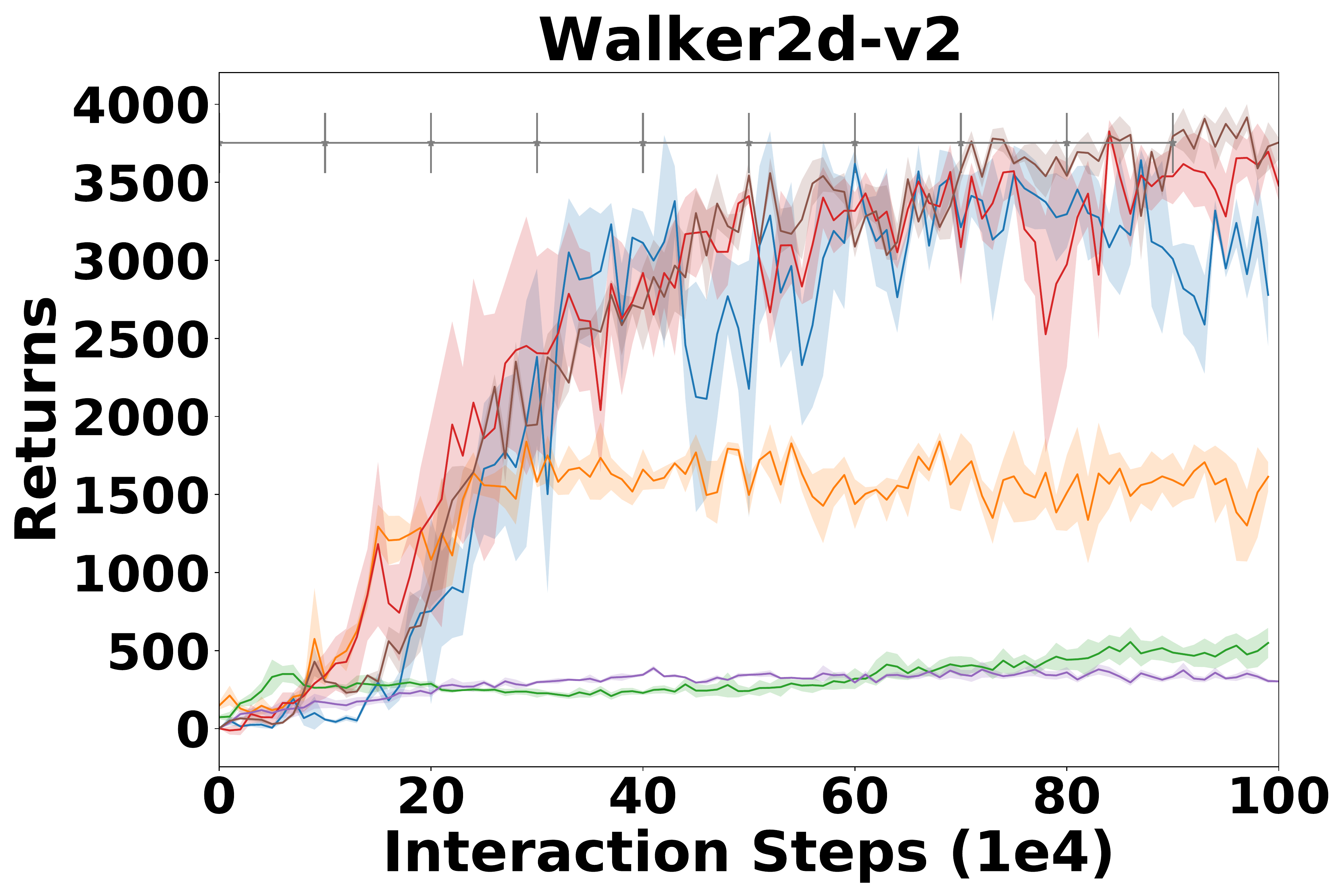}} 
         \end{minipage} 

         \hskip -0.3in  
         \begin{minipage}[b]{.33\textwidth}
             \centerline{\includegraphics[width=\columnwidth]{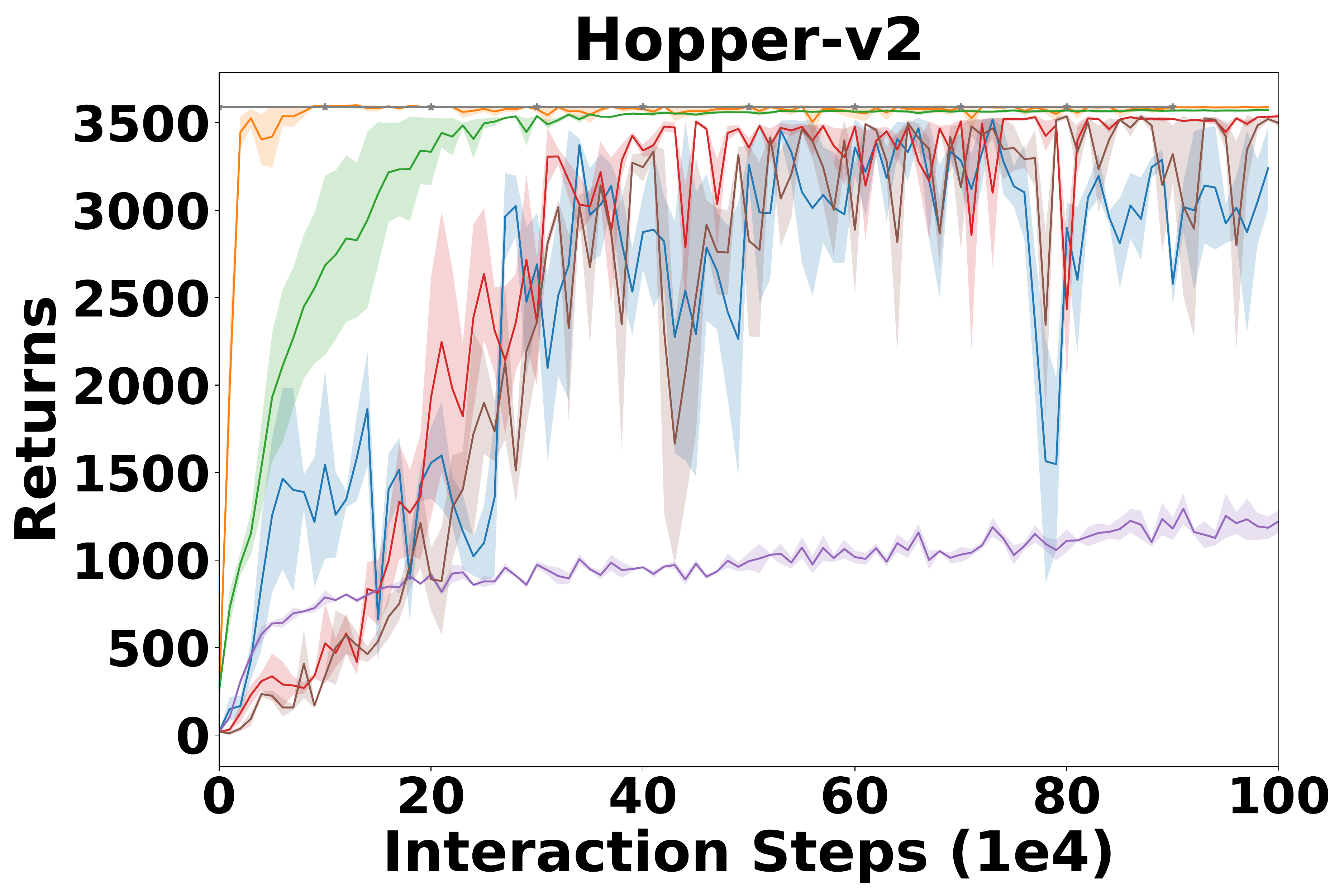}} 
         \end{minipage} 
         \begin{minipage}[b]{.33\textwidth}
             \centerline{\includegraphics[width=\columnwidth]{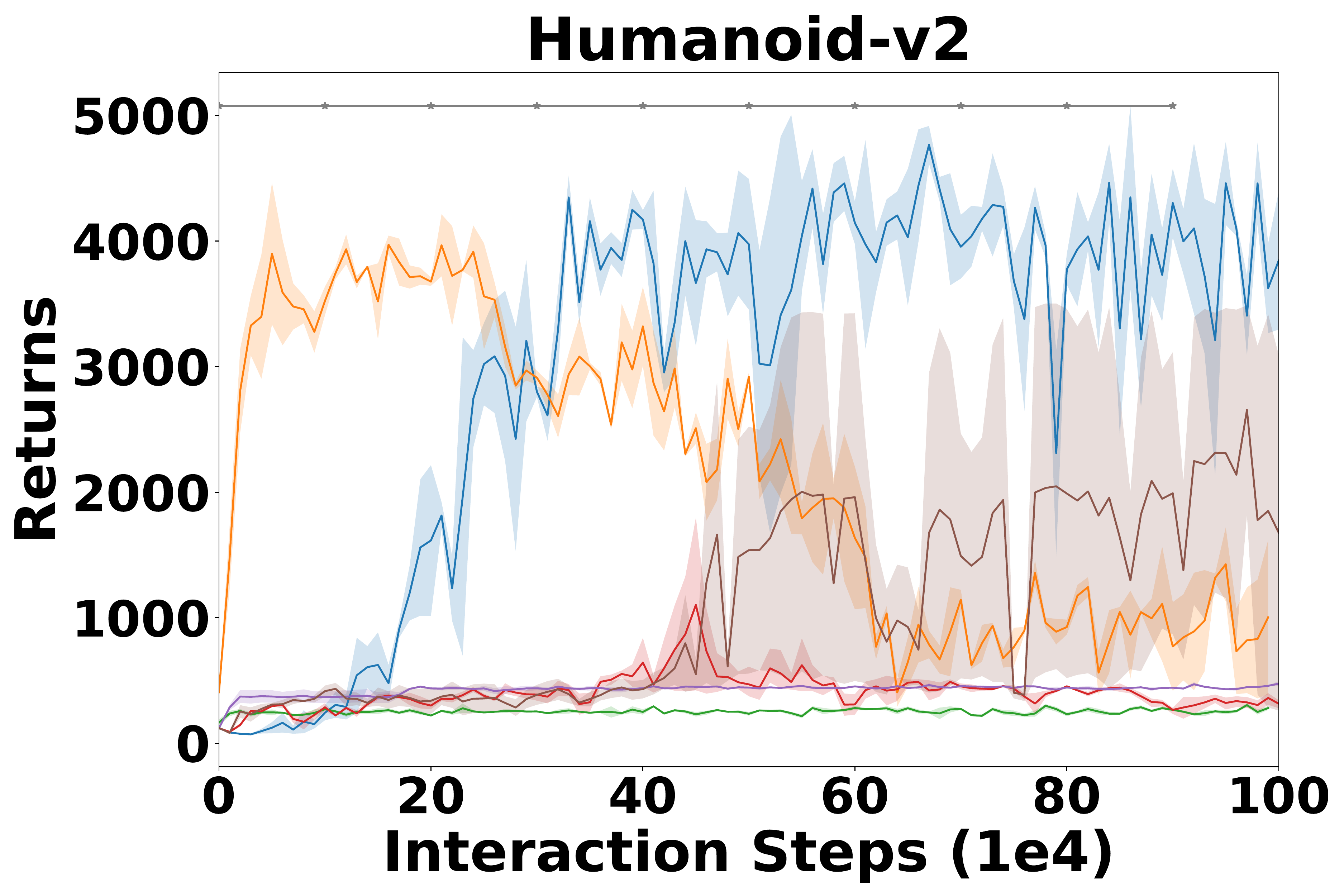}} 
         \end{minipage}   
         \begin{minipage}[b]{.33\textwidth}
             \centerline{\includegraphics[width=\columnwidth]{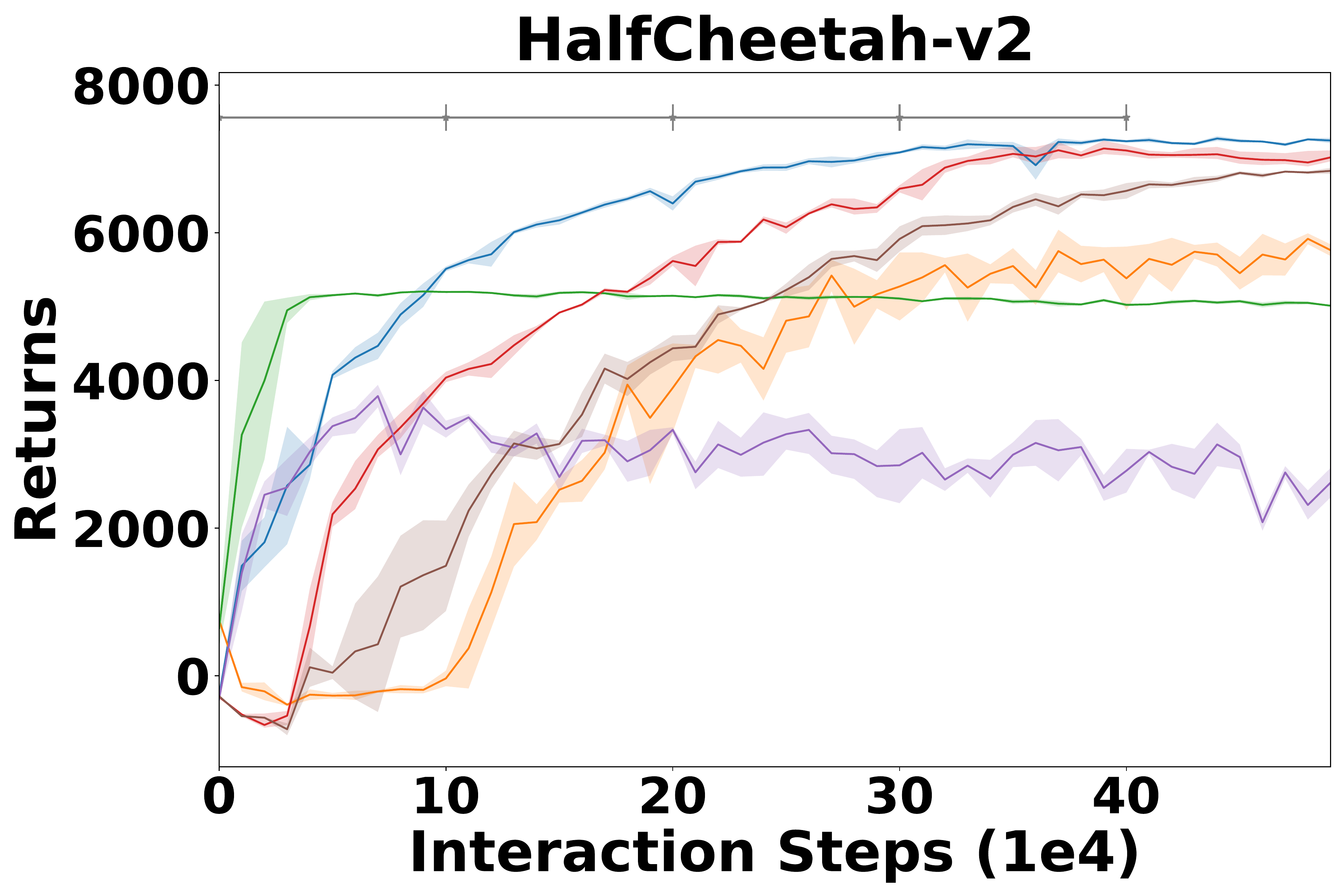}} 
         \end{minipage} 
     \vskip -0.1in
     \caption{Compared with strong off-policy baselines, \opolo is the only approach that consistently achieves competitive performance regarding both sample-efficiency and asymptotic performance  across  all  tasks, without accessing expert actions.} 
     \label{fig:ep4-stronger}
     \end{center}
     \vskip -0.1in
  \end{figure*}
 
\vspace{-0.1in}
\subsection{Ablation Study} \label{sec:ab-study}
\vskip -0.1in
In this section, we further analyze the effects of the inverse action regularization by a group of ablation studies.
Especially, we implement a variant of \opolo that does not learn an inverse action model to regulate the policy update. We compare this approach, dubbed as \opolox, against our original approach as well as the \DAC algorithm. 

\textit{\textbf{Effects on Sample efficiency:}} Performance curves in Figure~\ref{fig:ep4-ab} show that removing the inverse action regularization from \opolo slightly affects its sample-efficiency, although the degraded version is still comparable to \textit{DAC}. 
This impact is more visible in challenging tasks such as \textit{HalfCheetah} and \textit{Ant}.
From another perspective, the same phenomenon indicates that an inverse action regularization is beneficial for accelerating the IL process, especially for games with high observation-space.
An intuitive exploration is that, while our main objective serves as a driving force for \textit{mode-seeking}, a regularization term assists by encouraging the policy to perform \textit{mode-covering}. Combing these two motivations leads to a more efficient learning strategy.

\textit{\textbf{Effects on Performance:}}
Given a reasonable number of transition steps, the effects of an inverse-action model are less obvious regarding the asymptotic performance. As shown in Table~\ref{table:offpolicy-comparison}, \opolox~is mostly comparable to \opolo and \textit{DAC}. 
This implies that the effect of the \textit{state-covering} regularization will gradually fade out once the policy learns a reasonable state distribution. 
%
From another perspective, it indicates that following our main objective alone is sufficient to recover expert-level performance. 
Comparing with \BCO which uses the inverse model solely for behavior cloning, we find it more effective when serving as a regularization to assist distribution matching from a \textit{forward} direction.

\vskip -0.1in
\begin{figure*}[ht]  
    \begin{center}
        \hskip -0.4in 
        \begin{minipage}[b]{.33\columnwidth}
            \centerline{\includegraphics[width=\columnwidth]{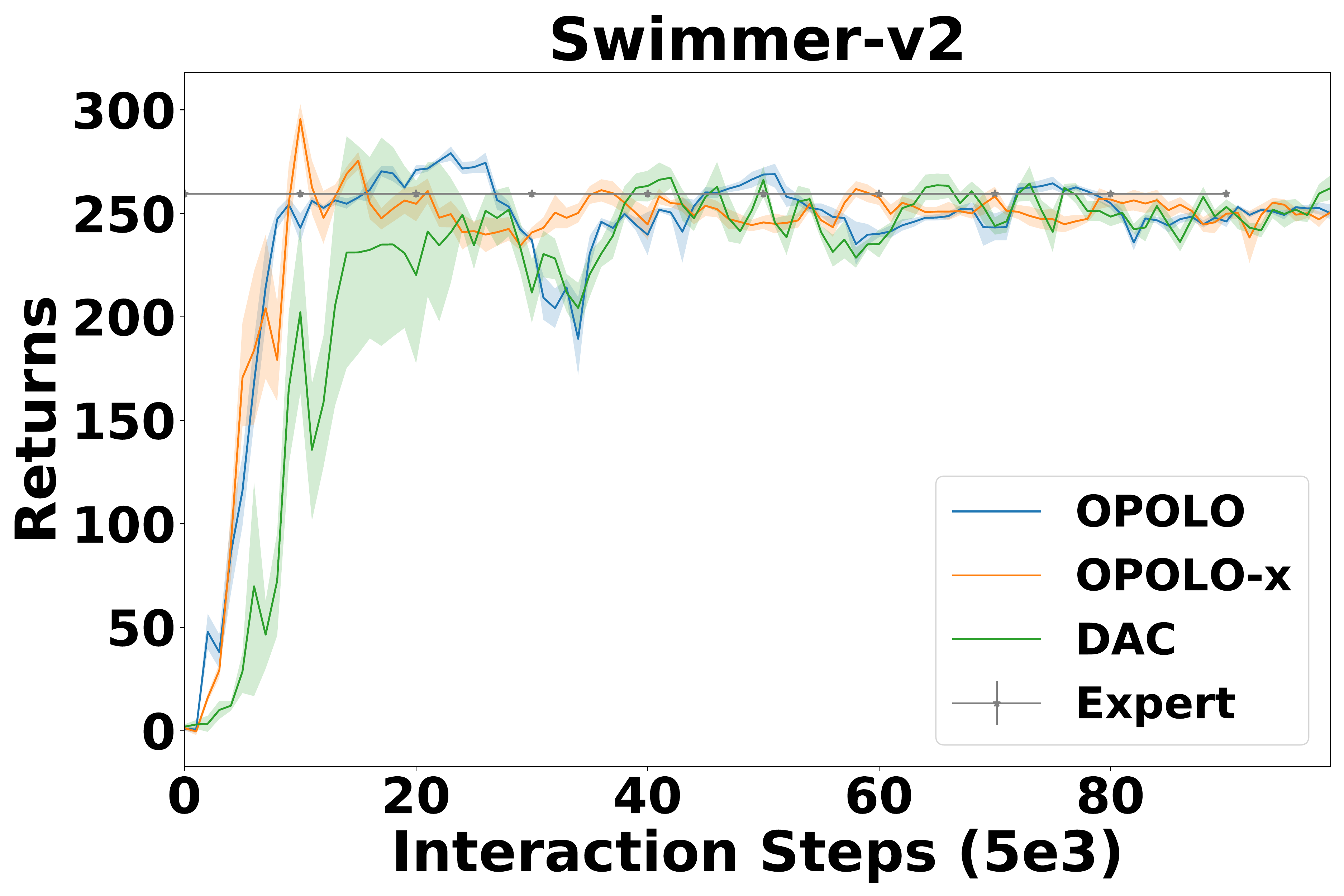}}  
        \end{minipage} 
        \begin{minipage}[b]{.33\columnwidth}
            \centerline{\includegraphics[width=\columnwidth]{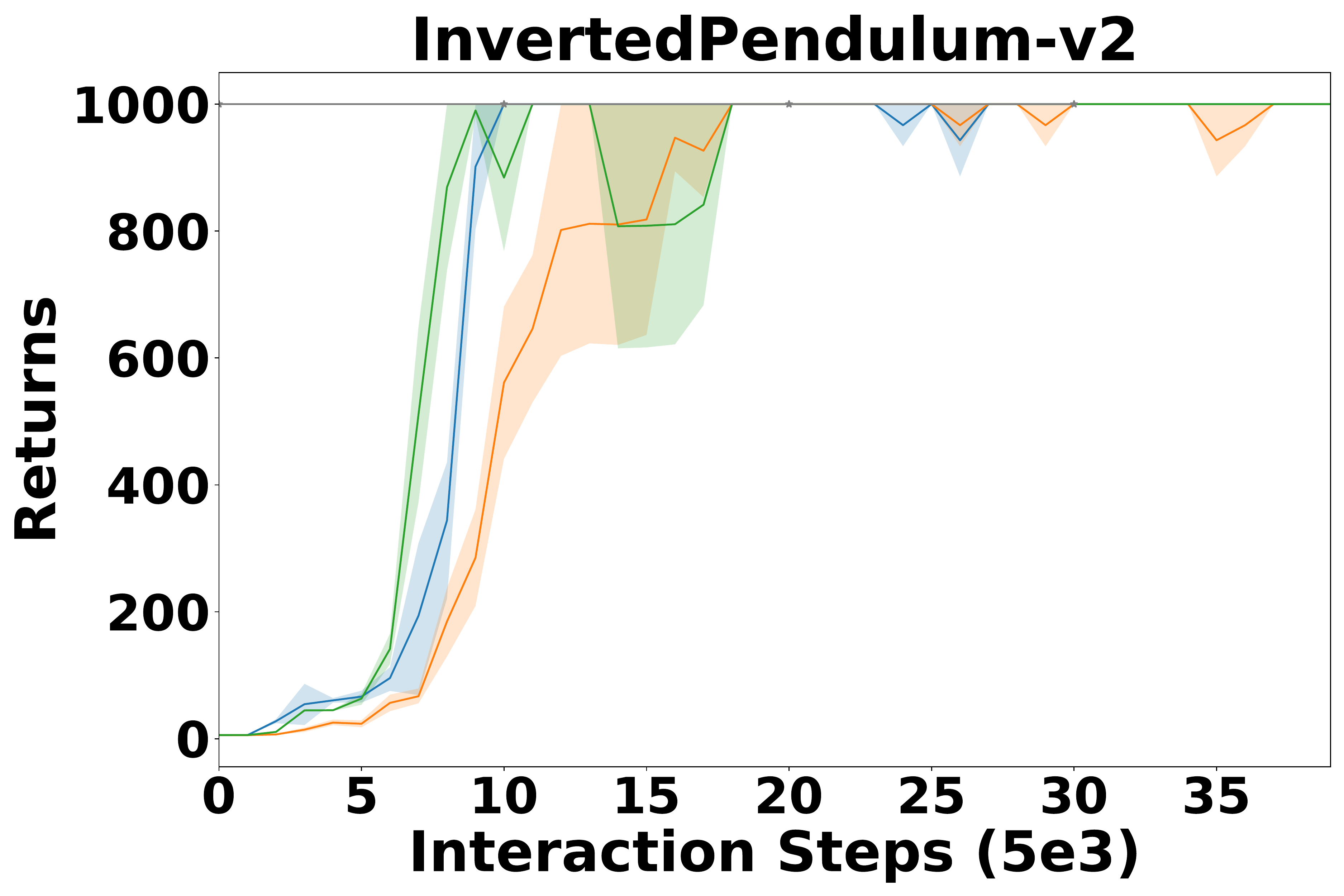}} 
        \end{minipage} 
        \begin{minipage}[b]{.34\columnwidth}
            \centerline{\includegraphics[width=\columnwidth]{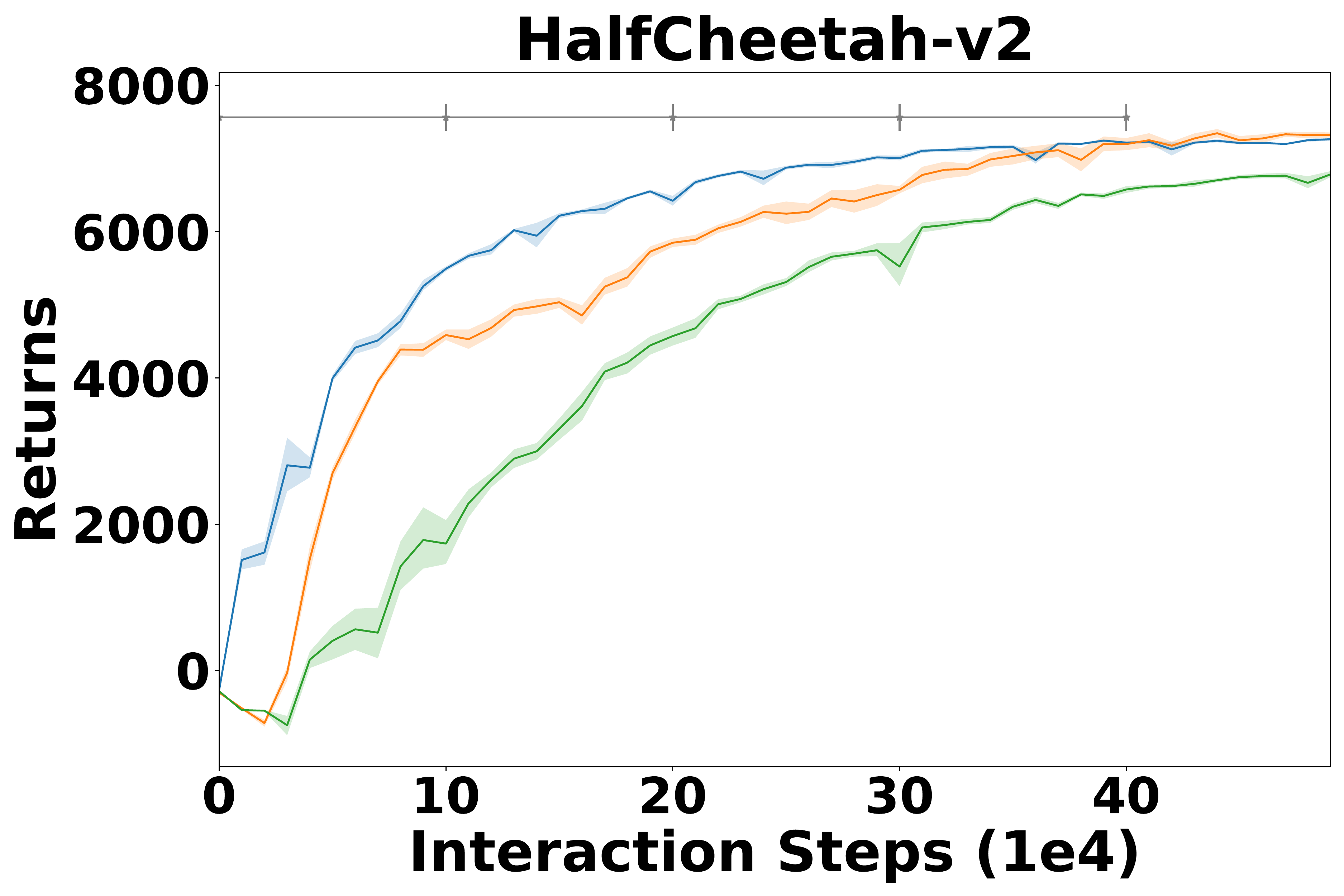}} 
        \end{minipage} 

        \hskip -0.3in  
        \begin{minipage}[b]{.35\textwidth}
            \centerline{\includegraphics[width=\columnwidth]{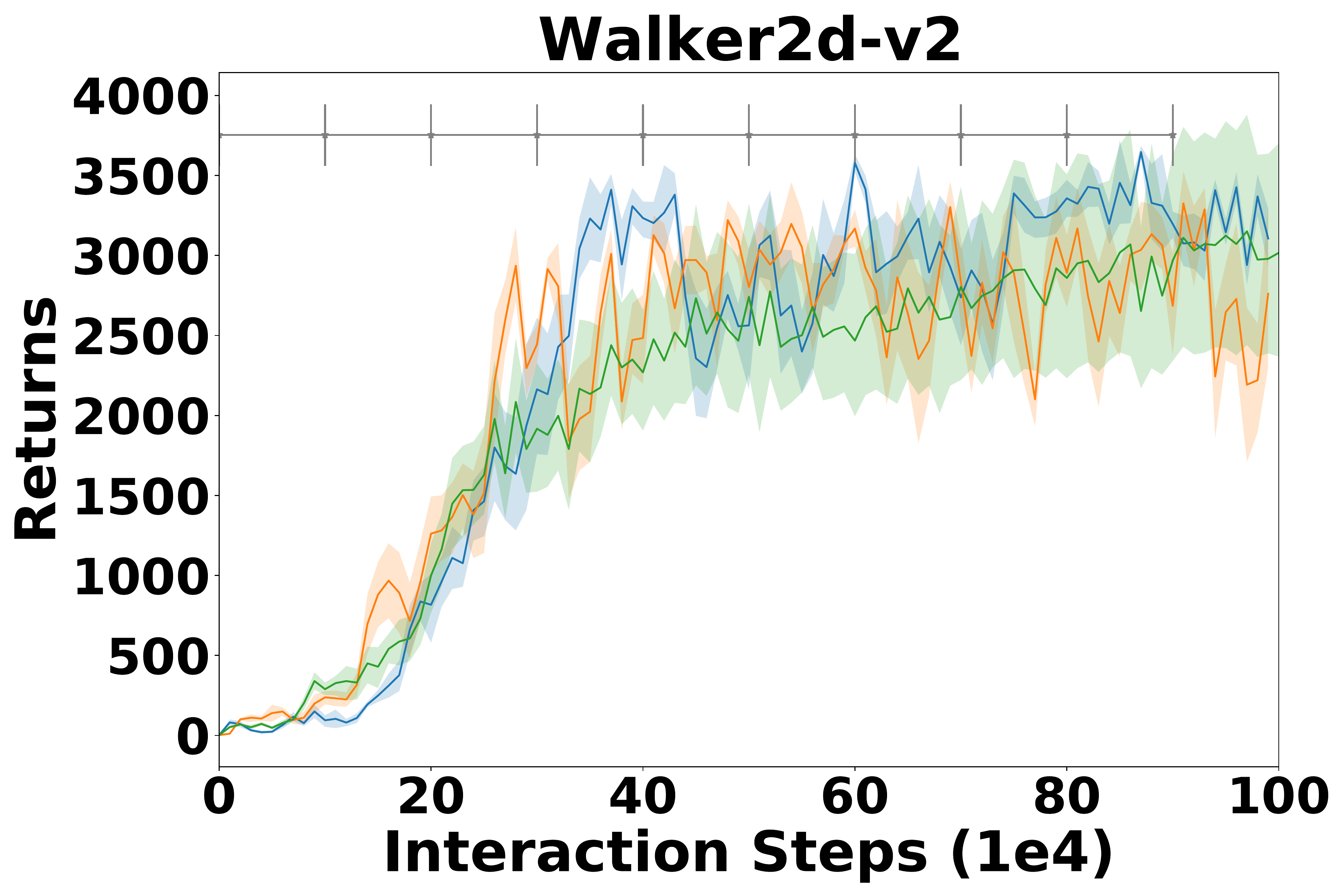}} 
        \end{minipage} 
        \hskip -0.05in  
        \begin{minipage}[b]{.34\textwidth}
            \centerline{\includegraphics[width=\columnwidth]{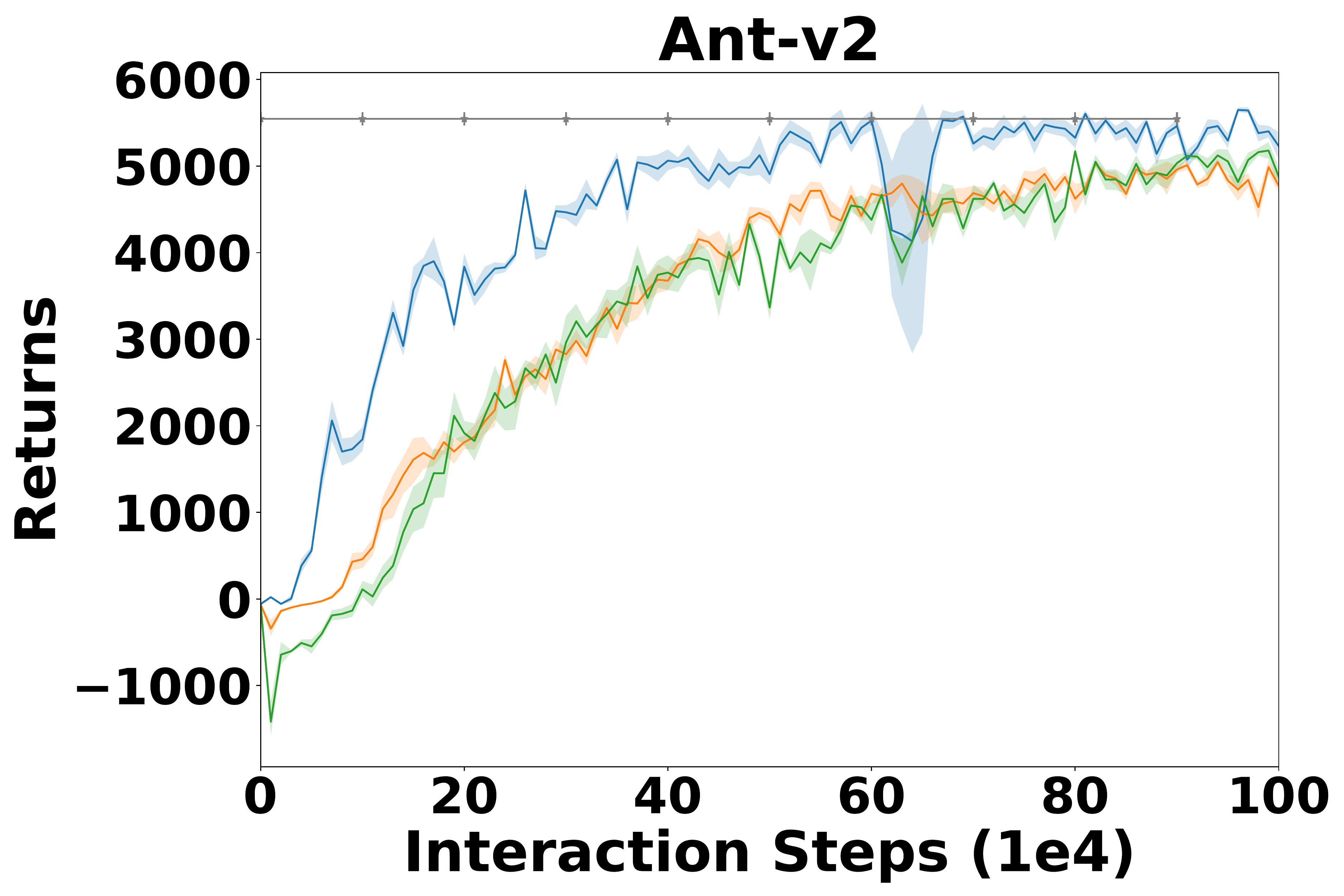}} 
        \end{minipage}   
        \hskip -0.05in  
        \begin{minipage}[b]{.34\textwidth}
            \centerline{\includegraphics[width=\columnwidth]{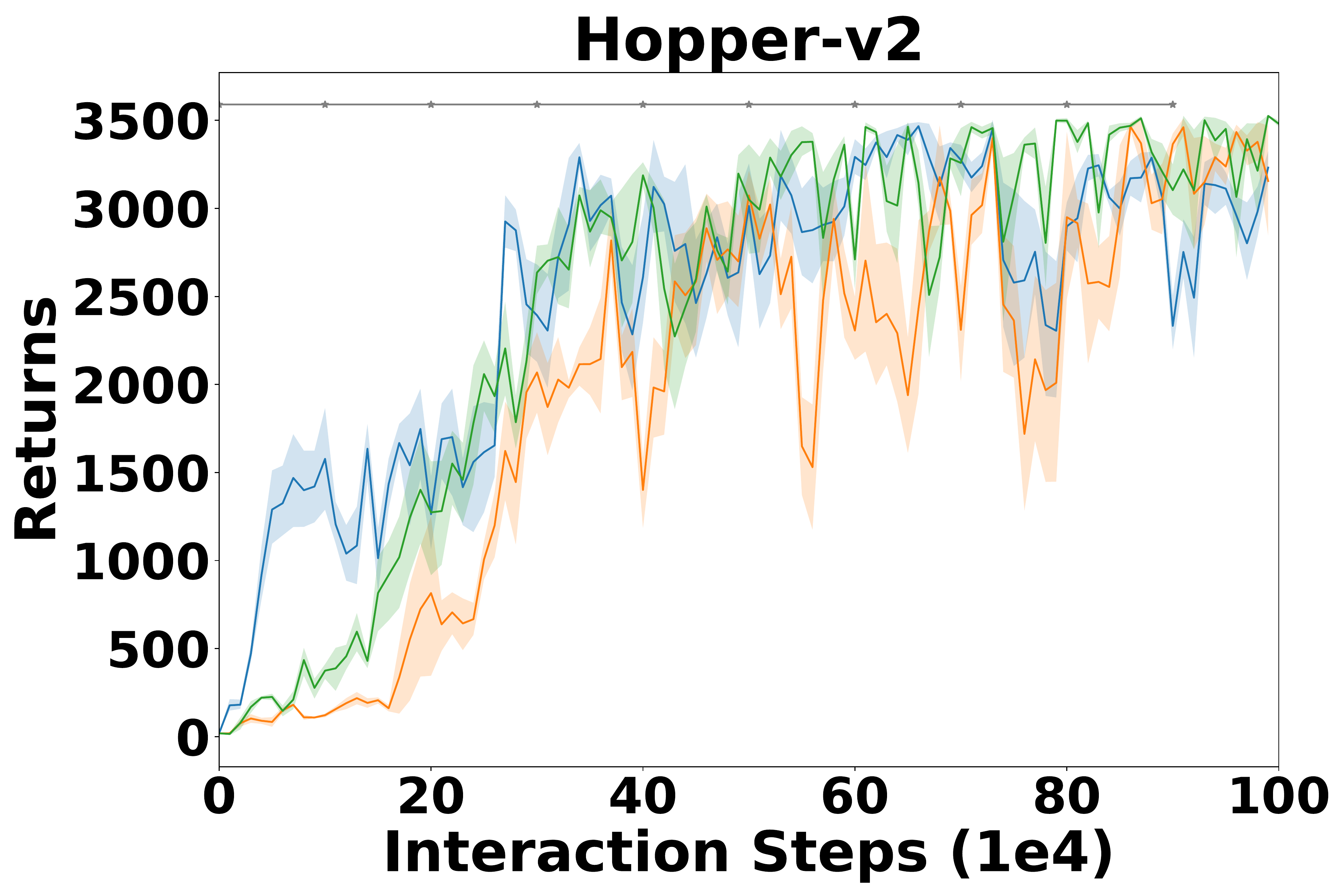}} 
        \end{minipage} 
    \caption{Removing the inverse action regularization (\opolox) results in slight efficiency drop, although its performance is still comparable to \opolo and \textit{DAC}.}
    \label{fig:ep4-ab}
    \end{center} 
 \end{figure*}

 \subsection{Sensitivity Analysis} 
 \vspace{-0.1in}
\begin{wrapfigure}{r}{0.5\linewidth}
    \begin{center}
        \vspace{-0.5in}
      \includegraphics[width=0.7\linewidth]{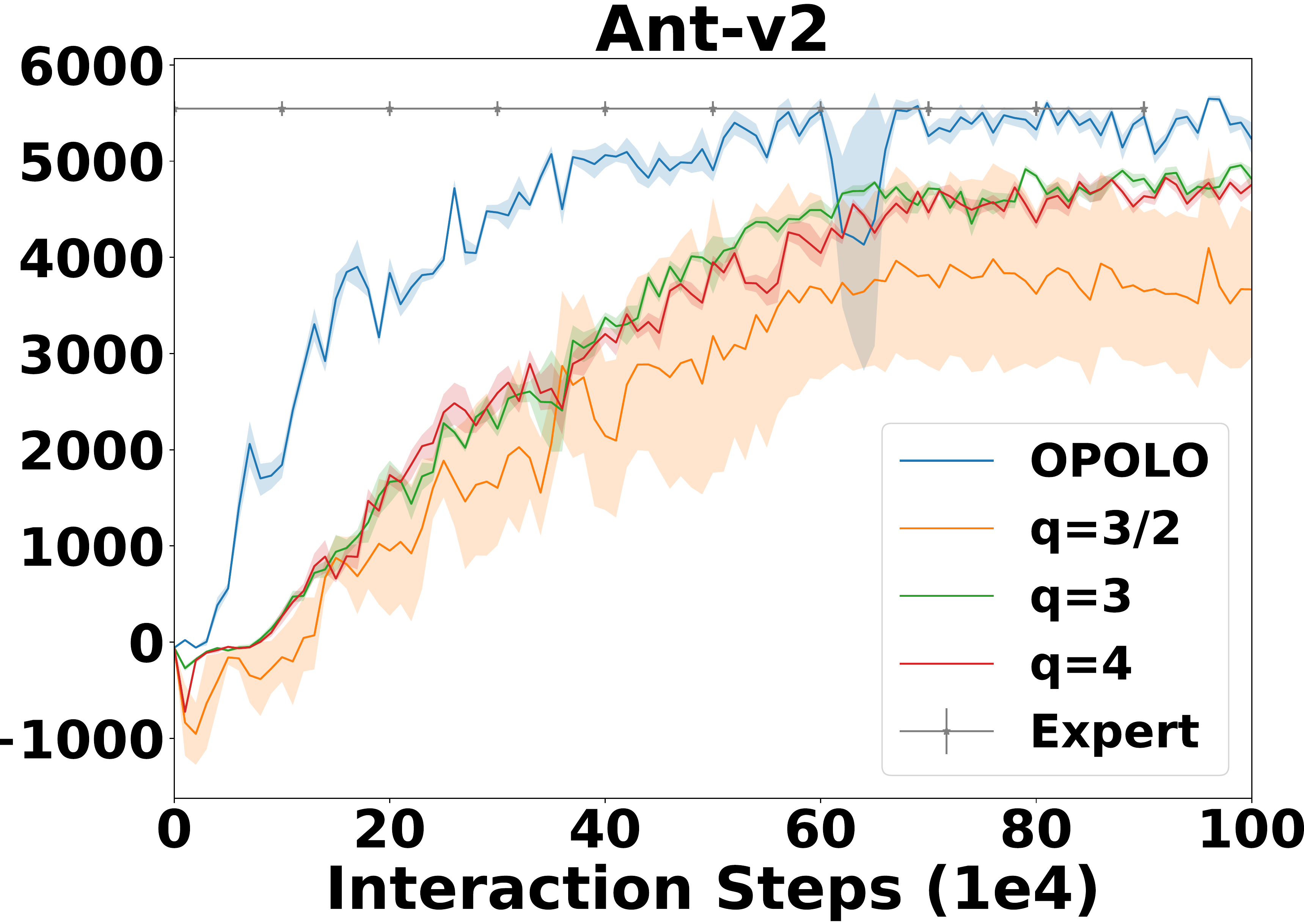}
    \vspace{-0.1in}
    \small{\caption{Performance of \SAIL given different $f$-functions.}}
    \label{fig:ep4-sensitivity}
    \end{center}
  \end{wrapfigure}
  
 To analyze the effects of different $f$-functions on the performance of the proposed approach, we explored a family of $f$-divergence where $f(x)= \frac{1}{p}|x|^{p}, f^*(y)=\frac{1}{q} |y|^{q},\text{s.t.~} \frac{1}{p} + \frac{1}{q} = 1, p,q > 1$, as adopted by \textit{DualDICE}~\cite{nachum2019dualdice}. 
Evaluation results show that \opolo yields reasonable performance across different $f$-functions, although our choice ($q=p=2$ ) turns out to be most stable. Results using the \textit{Ant} task is illustrated in Figure 4.



\section{Conclusions}
\vskip-0.1in




Towards sample-efficient imitation learning from observations (LfO), we
proposed a \textit{principled} approach that performs imitation learning by
accessing only a limited number of expert observations. We derived an upper-bound of the original LfO objective to enable efficient off-policy optimization, and augment the objective with an inverse action model regularization to speeds up the learning procedure. Extensive empirical studies are done to validate the proposed approach. 

\section{Acknowledgments}
\vspace{-0.1in}
This research was jointly supported by the National Science Foundation IIS-1749940, and the Office of Naval Research N00014-20-1-2382.
We would like to thank Dr. Boyang Liu and Dr. Junyuan Hong (Michigan State University) for providing insightful comments.
We also appreciate Dr. Mengying Sun (Michigan State University) for her assistance in proofreading the manuscript. 

\clearpage
\section{Broader Impact}
The success of Imitation Learning (IL) is crucial for realizing robotic intelligence.
Serving as an effective solution to a practical IL setting, \opolo has a promising future in various applications, including robotics control~\cite{kober2013reinforcement}, game-playing~\cite{silver2017mastering}, autonomous driving~\cite{sallab2017deep}, algorithmic trading~\cite{deng2016deep}, to name just a few.

On one hand, \opolo provides an working evidence of sample-efficient IL.
\opolo costs less environment interactions compared with conventional IL approaches. For tasks where taking real actions can be expensive (high-frequency trading) or dangerous (autonomous driving), using less interactions for imitation learning is a crucial requirement for successful applications. 

On the other hand, \opolo validates the feasibility of learning from incomplete guidance, and can enable IL in applications where expert demonstrations are costly to access.
Moreover, \opolo is more resemblant to human intelligence, as it can recover expertise simply by learning from expert observations. 
In general, \opolo has a strong impact on the advancement of IL, from the perspective of both theoretical and empirical studies. 
\bibliography{ref}
\bibliographystyle{unsrt}
\clearpage

\section{Appendix}  
For all the following derivations, we use $\cD_\KL[P(X)||Q(X)]$  to denote the KL-divergence between two distributions $P$ and $Q$:
\begin{align*}
    \cD_\KL[P(X)||Q(X)] = \cE_{x \sim p(x)} \log \frac{p(x)}{q(x)} = \int_X  p(x) \log \frac{p(x)}{q(x)} dx.
\end{align*}
Accordingly, when $P(X|Z)$ and $Q(X|Z)$ are \textit{conditional} distributions, $\cD_\KL[P||Q]$ denotes their \textit{conditional} KL-divergence:
\begin{align*}
    \cD_\KL[P(X|Z)||Q(X|Z)] = \int_{Z \times X } p(z) p(x|z) \log \frac{p(x|z)}{q(x|z)} dx dz.
\end{align*}

For simplicity, we will equivalently use $\cE_{x\sim p(x)}[\cdot]$ and $\cE_{p(x)}[\cdot]$ to denote certain expectation in which $x$ is sampled from the distribution $P(X)$.

\subsection{Derivation of Surrogate Objective} \label{appendix:lfo-upperbound}
We first refer Lemma \ref{theorem:kl} from \cite{yang2019imitation} for a complete presentation:
{\lemma \label{theorem:kl}
    \begin{align*}
        \cD_\KL[\mupi(s,a,s') || \mut(s,a,s')] = \cD_\KL[\mupi(s,a) || \mut(s,a))]. 
    \end{align*}
}
\vskip -0.2in    
\begin{proof}
\begin{align*}
    \cD_\KL[\mupi(s,a,s') || \mut(s,a,s')]
    &= \int_{\cS \times \cA \times \cS} \mupi(s,a, s') \log \frac{\mupi(s,a) \cdot P(s'|s,a)}{\mut(s,a) \cdot P(s'|s,a)} ds' da ds  \\
    &=\int_{\cS \times \cA \times \cS} \mupi(s,a, s') \log \frac{\mupi(s,a)}{\mut(s,a)} ds' da ds \\
    &=\int_{\cS \times \cA } \mupi(s,a) \log \frac{\mupi(s,a)}{\mut(s,a)}  da ds \\
    &= \cD_\KL[\mupi(s,a) || \mut(s,a)]. 
\end{align*}
\end{proof}
\vskip -0.2in    

{\lemma \label{marginal:kl}
\begin{align*}
    \cD_\KL[\mupi(s,s') || \mut(s,s')] \leq \cD_\KL[\mupi(s,a) || \mut(s,a)]. 
\end{align*}
}
\begin{proof}
    As defined in Table \ref{table:preliminary}, $\mupi(a|s,s')$ is the inverse-action transition probability induced by policy $\pi$: $$\mupi(a|s,s') = \frac{\mupi(s,a,s')}{\mupi(s,s')} = \frac{\bcancel{\mupi(s)} \pi(a|s)P(s'|s,a)}{\int_{\cA} \bcancel{\mupi(s)} \pi(\ba|s)P(s'|s,\ba) d\ba} = \frac{\pi(a|s)P(s'|s,a)}{\int_{\cA}  \pi(\ba|s)P(s'|s,\ba) d\ba}.$$
    Based on this notion, we can derive:
    \begin{align} \label{eq:gap}
        & \cD_\KL[\mupi(s,a) || \mut(s,a)] \nonumber\\
      = & \underbrace{\cD_\KL[\mupi(s,a,s') || \mut(s,a,s')]}_{\text{Lemma \ref{theorem:kl}}} \nonumber\\
      = &\int_{\cS \times \cA \times \cS} \mupi(s,a, s') \log \frac{\mupi(s,a,s')}{\mut(s,a, s')} ds' da ds \nonumber\\ 
      = &\int_{\cS \times \cA \times \cS} \mupi(s,s') \mupi(a|s,s') \log  \frac{\mupi(s,s') \times \mupi(a|s,s')}{\mut(s, s') \times \mut(a|s, s')}  ds' da ds \nonumber\\ 
      = &\int_{\cS \times \cA \times \cS} \mupi(s,s') \mupi(a|s,s') \log \frac{\mupi(s,s') }{\mut(s, s')} ds' da ds + \int_{\cS \times \cA \times \cS} \mupi(s,s') \mupi(a|s,s') \log \frac{\mupi(a|s,s')}{\mut(a|s, s')} ds' da ds \nonumber\\ 
      = &\int_{\cS \times \cA \times \cS} \mupi(s,s') \log \frac{\mupi(s,s') }{\mut(s, s')} ds' ds + \cD_\KL[\mupi(a|s,s')|| \mut(a|s,s')] \nonumber\\ 
      = & \cD_\KL[\mupi(s,s')||\mut(s,s')] + \cD_\KL[\mupi(a|s,s')|| \mut(a|s,s')] \\ 
      \geq & \cD_\KL[\mupi(s,s')||\mut(s,s')]. \nonumber  
    \end{align}
\end{proof}
 
Based on Lemma\ref{marginal:kl}, we can derive the upper-bound of our original objective:
\begin{theorem} [Surrogate Objective as the Divergence Upper-bound]
\begin{align*}
    \cD_\KL[\mupi(s,s') || \mut(s,s')] \leq \cE_{ \mupi(s,s')} [\log \frac{\muR(s,s')} {\mut(s,s')}] + \cD_\KL[\mupi(s,a) || \muR(s,a)]. 
\end{align*}
\end{theorem}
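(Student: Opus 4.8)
The plan is to treat the off-policy distribution $\muR(s,s')$ as an intermediate pivot inside the left-hand KL-divergence, splitting it into a piece that already matches the first term of the target and a residual KL-divergence that I can control with the already-proved Lemma~\ref{marginal:kl}. The whole argument is a one-line ``add-and-subtract inside the logarithm'' decomposition followed by a single application of an earlier lemma, so no heavy computation is needed.

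First I would expand the left-hand side from the definition of KL-divergence and insert $\muR(s,s')$ by multiplying and dividing inside the logarithm, then split the logarithm of the product. This yields the exact decomposition
\begin{align*}
\cD_\KL[\mupi(s,s') || \mut(s,s')] = \cD_\KL[\mupi(s,s') || \muR(s,s')] + \cE_{\mupi(s,s')}\left[\log\frac{\muR(s,s')}{\mut(s,s')}\right].
\end{align*}
The second summand is precisely the first term of the claimed upper-bound, so it remains only to bound the residual $\cD_\KL[\mupi(s,s') || \muR(s,s')]$ by $\cD_\KL[\mupi(s,a) || \muR(s,a)]$.

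For that final step I would invoke Lemma~\ref{marginal:kl}, but with $\muR$ playing the role that $\mut$ plays there. The only real subtlety — and the step I expect to be the main obstacle to state cleanly — is observing that Lemma~\ref{marginal:kl} is not special to the expert measure: its proof (via Lemma~\ref{theorem:kl}) uses only that the joint distribution factors through the \emph{shared} MDP transition kernel, i.e.\ $\mut(s,a,s') = \mut(s,a)P(s'|s,a)$, so that $P(s'|s,a)$ cancels in the log-ratio and the inverse-action term is nonnegative. Since $\muR$ is induced by historical transitions from the same MDP, it obeys the identical factorization $\muR(s,a,s') = \muR(s,a)P(s'|s,a)$, and the arguments of Lemmas~\ref{theorem:kl} and~\ref{marginal:kl} carry over verbatim under the substitution $\mut \mapsto \muR$. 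This gives $\cD_\KL[\mupi(s,s') || \muR(s,s')] \le \cD_\KL[\mupi(s,a) || \muR(s,a)]$, and substituting it into the decomposition above yields the theorem. The hard part is therefore conceptual rather than computational: recognizing that the off-policy measure can be slotted in as a pivot and that the earlier lemma is agnostic to the choice of second distribution, provided it shares the transition dynamics.
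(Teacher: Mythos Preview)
Your proposal is correct and matches the paper's proof essentially line for line: the paper likewise inserts $\muR(s,s')$ inside the logarithm, splits into $\cE_{\mupi(s,s')}[\log\frac{\muR(s,s')}{\mut(s,s')}] + \cD_\KL[\mupi(s,s')\|\muR(s,s')]$, and then applies Lemma~\ref{marginal:kl} with $\muR$ in place of $\mut$. Your explicit remark that the lemma transfers because $\muR$ shares the same transition kernel $P(s'|s,a)$ is a welcome clarification that the paper leaves implicit.
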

\vskip -0.1in
\begin{proof}
    \begin{align*}
        \cD_\KL[\mupi(s,s') || \mut(s,s')]  &= \int_{\cS \times \cS} \mupi(s,s') \log \frac{\mupi(s,s')}{\mut(s,s')} ds ds' \\
        &= \int_{\cS \times \cS} \mupi(s,s') \log \Big (\frac{\muR(s,s')} {\mut(s,s')} \times \frac{\mupi(s,s')} {\muR(s,s')} \Big )ds ds' \\
        &= \int_{\cS \times \cS} \mupi(s,s') \log \frac{\muR(s,s')} {\mut(s,s')} ds ds' + \int_{\cS \times \cA} \mupi(s,s') \log  \frac{\mupi(s,s')} {\muR(s,s')} ds ds' \\
        &= \cE_{\mupi(s,s')} [\log \frac{\muR(s,s')} {\mut(s,s')}] + \cD_\KL[\mupi(s,s') || \muR(s,s')] \\  
        &\leq \cE_{ \mupi(s,s')} [\log \frac{\muR(s,s')} {\mut(s,s')}] + \underbrace{\cD_\KL[\mupi(s,a) || \muR(s,a)].}_{\text{derived from Lemma \ref{marginal:kl}}}\\  
    \end{align*}
\end{proof}

\subsection{Connections between LfO and LfD} \label{appendix:gap}
\begin{theorem}
$$\cD_\KL[\mupi(a|s,s') || \mut(a|s,s')] = \cD_\KL[\mupi(s,a)||\mut(s,a)] - \cD_\KL[\mupi(s,s') || \mut(s,s')].$$
\end{theorem}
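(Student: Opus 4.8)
The plan is to recognize that the claimed identity is nothing more than a rearrangement of the chain-rule decomposition of the KL divergence that already appears inside the proof of Lemma~\ref{marginal:kl}, so the whole argument reduces to invoking Lemma~\ref{theorem:kl} together with a single application of the chain rule for KL divergence.

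First I would start from Lemma~\ref{theorem:kl}, which gives $\cD_\KL[\mupi(s,a,s') || \mut(s,a,s')] = \cD_\KL[\mupi(s,a) || \mut(s,a)]$. This lets me replace the $(s,a)$ marginal on the right-hand side of the target identity by the full joint $(s,a,s')$ divergence, which is the convenient object to work with because the inverse-action conditional $\mu(a|s,s')$ lives naturally over the joint support $\cS \times \cA \times \cS$.

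Next I would factor the joint as $\mu(s,a,s') = \mu(s,s')\,\mu(a|s,s')$ and split the log-ratio $\log\frac{\mupi(s,s')\mupi(a|s,s')}{\mut(s,s')\mut(a|s,s')}$ into a marginal piece and a conditional piece, exactly as telescoped line by line in \eq{\ref{eq:gap}}. This yields the chain rule
\begin{align*}
\cD_\KL[\mupi(s,a,s') || \mut(s,a,s')]
= \cD_\KL[\mupi(s,s') || \mut(s,s')]
+ \cD_\KL[\mupi(a|s,s') || \mut(a|s,s')],
\end{align*}
where, by the conditional-divergence convention fixed at the start of the appendix, the conditional term carries the outer expectation over the shared state-transition marginal $\mupi(s,s')$.

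Combining the two displays gives $\cD_\KL[\mupi(s,a) || \mut(s,a)] = \cD_\KL[\mupi(s,s') || \mut(s,s')] + \cD_\KL[\mupi(a|s,s') || \mut(a|s,s')]$, and subtracting $\cD_\KL[\mupi(s,s') || \mut(s,s')]$ from both sides produces the claim verbatim. There is no real obstacle here: the only point deserving care is the bookkeeping of the conditional-KL convention, so that the expectation over $\mupi(s,s')$ is attached to the correct term. Unlike Lemma~\ref{marginal:kl}, where non-negativity of KL was used to drop the conditional term and obtain an inequality, here nothing is discarded and the relation is an exact equality.
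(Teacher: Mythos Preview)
Your proof is correct and follows essentially the same approach as the paper: the paper's own proof simply cites \eq{\ref{eq:gap}} from the proof of Lemma~\ref{marginal:kl}, which is precisely the chain-rule identity you reconstruct, and then rearranges it. You have spelled out in slightly more detail what the paper leaves implicit, but the logical content is identical.
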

\begin{proof}
    We can refer \eq{\ref{eq:gap}} from the proof of Lemma \ref{marginal:kl}:
 \begin{align*}
    \cD_\KL[\mupi(s,a) || \mut(s,a)] = \cD_\KL[\mupi(s,s')||\mut(s,s')] + \cD_\KL[\mupi(a|s,s')|| \mut(a|s,s')].
 \end{align*} 
\end{proof}

\subsection{An Unoptimizable Gap Between LfO and LfD} \label{appendix:gap-proof}
\vskip -0.1in
\textbf{Remark \ref{remark:gap-property}}:  \textit{
    In a non-injective MDP, the discrepancy of $\cD_\KL[\mupi(a|s,s') || \mut(a|s,s')]$ cannot be optimized without knowing expert actions.
}
\begin{proof} We provide proof with a counter-example. Consider a non-injective MDP in a tabular case, whose transition dynamics is shown in Table \ref{table:toy-mdp},  with $| \cS | = 3$, and $| \cA | = 4$. 
Especially, there exists two actions which lead to the same deterministic transition, $\ie$ for $s_1, s_2 \in \cS$,  $\exists~ a_0, a_2 \in \cA$, $\st~ P(s_2|s_1, a_2) = P(s_2|s_1, a_0) = 1$, as illustrated in Figure \ref{fig:toy-mdp}.
    
In this MDP, there is an \textit{expert} policy $\pi_E$ as listed in Table \ref{table:expert-policy}.
Trajectories generated by this expert are illustrated as blue lines in Figure \ref{fig:toy-mdp}. 
In a LfO scenario, a learning agent only has access to sequences of states visited by the expert: $\RE=\{s_1, s_2, s_3, s_1, s_2, s_3, \cdots \}$, without knowing what actions have been taken by the expert. 

Based on the given observations $\RE$, a policy $\pi$ can only satisfy the state distribution matching with $\cD_\KL[\mupi(s,s') || \mut(s,s')] = 0$, but unable to optimize $\cD_\KL[\mupi(a|s,s) || \mut(a|s,s)]$, as both $a_0$ and $a_2$ lead to a deterministic transition of $s_1 \to s_2$.
In lack of expert actions, the best guess for a learning policy is to equally distribute action probabilities with $\pi(a_0|s_1) = (a_2|s_1) = 0.5$. which results in $\mupi(a_0|s_1,s_2)= \mupi(a_2|s_1,s_2)= 0.5$, whereas $\mut(a_2|s_1,s_2)= 1$, $\mut(a_0|s_0,s_1)= 0$.
Consequently, we reach at $\cD_\KL[\mupi(a|s,s')||\mut(a|s,s')] > 0$.  
\end{proof} 

\begin{table} 
        \parbox{.29\linewidth}{
        \begin{center}
        \begin{tabular}{c|cccc} 
            $P$  & $a_0$ & $a_1$ & $a_2$ & $a_3$ \\ \hline
        $P(s_1|s_1, \cdot)$      &  0  & 1  & 0  & 0  \\  
        $P(s_2|s_1, \cdot)$      &   \textbf{1} & 0  & \textbf{1}  & 0  \\  
        $P(s_3|s_1, \cdot)$      &  0   & 0  & 0  & 1 \\ \hline  
        $P(s_1|s_2, \cdot)$      &  0  & 1  & 0  & 0  \\  
        $P(s_2|s_2, \cdot)$      &   0  & 0  & 1  & 0 \\   
        $P(s_3|s_2, \cdot)$      &  0   & 0  & 0  & 1 \\ \hline  
        $P(s_1|s_3, \cdot)$      &  0   & 1  & 0  & 0 \\    
        $P(s_2|s_3, \cdot)$      &  0   & 0  & 1  & 0 \\  
        $P(s_3|s_3, \cdot)$      &  0   & 0  & 0  & 1 \\   
        \end{tabular} 
        \caption{A deterministic but non-injective MDP.} \label{table:toy-mdp}
        \end{center}
        } 
        \hfill
        \parbox{.29\linewidth}{
        \centering 
        \begin{tabular}{c|ccc} 
            $\pi$ & $s_1$  & $s_2$ & $s_3$\\ \hline
            $a_0$     & \textbf{0.5} & 0  & 0 \\ \hline
            $a_1$     & 0 & 0  & 1 \\ \hline
            $a_2$     & \textbf{0.5} & 0  & 0 \\ \hline
            $a_3$     & 0 & 1  & 0 \\  
        \end{tabular} 
        \caption{Learning Policy $\pi$.}
            \label{table:agent-policy} 
        }
        \parbox{.29\linewidth}{
        \centering
        \begin{tabular}{c|ccc} 
            $\pi_E$ & $s_1$  & $s_2$ & $s_3$\\ \hline
            $a_0$     & 0 & 0  & 0 \\ \hline
            $a_1$     & 0 & 0  & 1 \\ \hline
            $a_2$     & \textbf{1} & 0  & 0 \\ \hline
            $a_3$     & 0 & 1  & 0 \\  
            \end{tabular} 
        \caption{Expert Policy $\pi_E$.}
            \label{table:expert-policy} 
        }
\end{table}
\vskip -0.1in
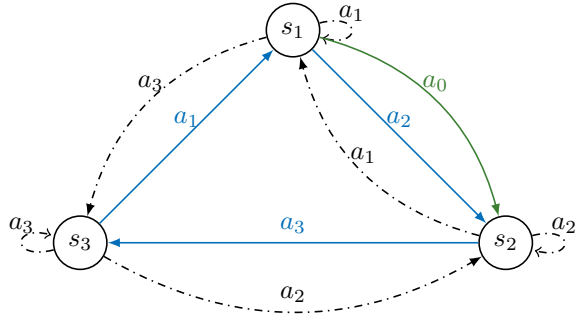
\begin{figure}
\begin {center}
\begin {tikzpicture}[-latex ,auto ,node distance =4 cm,on grid ,
semithick ,
state/.style ={ circle ,top color =white ,  draw,black , text=black , minimum width =0.5cm}]
\node[state] (A) {$s_1$};
\node[state] (B) [below right=of A] {$s_2$};
\node[state] (C) [below left=of A] {$s_3$};
\path (A) edge[color=NavyBlue!] node[above]{$a_2$} (B);
\path[<->] (A) edge [loop right,dash dot] node[above] {$a_1$} (A);
\path[<->] (B) edge [loop right,dash dot] node[above] {$a_2$} (B);
\path[<->] (C) edge [loop left,dash dot] node[above] {$a_3$} (C);
\path (A) edge [bend left,color=OliveGreen] node[above]{$a_0$} (B);
\path (B) edge[color=NavyBlue!] node[above]{$a_3$} (C);
\path (C) edge[color=NavyBlue!] node[above]{$a_1$} (A);
\path (C) edge[bend right,dash dot] node[above]{$a_2$} (B);
\path (A) edge [bend right,dash dot] node[above]{$a_3$} (C);
\path (B) edge [bend left,dash dot] node[above]{$a_1$} (A);
\end{tikzpicture} 
\end{center} 
\caption{Transition of an non-injective MDP.}
\label{fig:toy-mdp}
\end{figure}

\textbf{Remark: } \textit{In a deterministic and injective MDP, it satisfies that $\forall~\pi:\cS \to \cA,~\cD_\KL[\mupi(a|s,s') || \mut(a|s,s')] = 0$.}

We provide proof in a finite, \textbf{discrete} state-action space, although the conclusion is valid to extend to continuous cases.
\begin{proof}
    In a deterministic and injective MDP, we can interpret the transition dynamics with a \textit{deterministic} function g:  

    $\exists g:\cS \times \cA \to \cS$, $\st~\forall~(s,a,s')$, $g(s,a) = s' \iff P(s'|s,a)=1$, and $g(s,a) \neq s' \iff P(s'|s,a) = 0$.


    since this MDP is also injective, given arbitrary policy $\pi$ and a transition $s \to s', (s,s') \sim \mupi(s,s')$, there exists one and only action $a$ which satisfies $g(s,a)=s', P(s'|s,a)=1$.
    
    Accordingly, $\mupi(a|s,s') = \frac{\pi(a|s)P(s'|s,a)}{\cE_{\ba \sim \pi(\cdot|s)}[P(s'|s,a)]} = \ind[g(s,a)=s']$ depends only on the transition dynamics, where $\ind(x)$ is an indicator function. The same conclusion applies to $\mut(a|s,s')$ as well. 
    Therefore, we reach at:
    \begin{align*}
        \forall~\pi:\cS \to \cA,~&\cD_\KL[\mupi(a|s,s') || \mut(a|s,s')] \\
       =& \cE_{  \mupi(s,a,s')} \big[ \log\frac{\ind[g(s,a)=s']}{\ind[g(s,a)=s']}\big]  \\
       = & \cE_{ \mupi(s,a,s')} \big[ \log\frac{1}{1}\big]=0.
    \end{align*}
\end{proof}

\subsection{Upper-bound of the KL-Divergence}\label{appendix:f-divergence}
\begin{theorem}  
For two arbitrary distributions $P$ and $Q$, and an $f$-divergence with $f(x)=\frac{1}{2} x^2$, it satisfies that $ \cD_\KL[P||Q] \leq \cD_f[P||Q]$ .
\end{theorem}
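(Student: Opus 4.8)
The plan is to reduce the statement to a single scalar inequality on the likelihood ratio and then integrate. Writing both divergences as expectations against the reference measure $Q$, and setting $r(x) = P(x)/Q(x)$, I would first record that
\[
   \cD_\KL[P||Q] = \cE_{x\sim Q}\big[\, r(x)\log r(x)\,\big],
   \qquad
   \cD_f[P||Q] = \cE_{x\sim Q}\big[\, \tfrac{1}{2}\, r(x)^2 \,\big],
\]
the latter because $\cD_f[P||Q] = \cE_{x\sim Q}[f(P/Q)]$ with $f(t)=\tfrac12 t^2$. Since expectation is monotone, it suffices to prove the pointwise bound $t\log t \le \tfrac12 t^2$ for every $t\ge 0$; applying it with $t = r(x)$ and taking the $Q$-expectation then yields the claim directly.

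Next I would prove the scalar inequality. For $t=0$ both sides vanish under the convention $0\log 0 = 0$, so I fix $t>0$ and divide through by $t$, reducing the claim to $\log t \le \tfrac12 t$. To verify this I study $\phi(t) := \tfrac12 t - \log t$ on $(0,\infty)$: its derivative $\phi'(t) = \tfrac12 - \tfrac1t$ vanishes only at $t=2$, is negative for $t<2$ and positive for $t>2$, so $\phi$ attains its global minimum at $t=2$ with value $\phi(2) = 1 - \log 2 > 0$, using $\log 2 < 1$. Hence $\phi(t)>0$ throughout, i.e.\ $\log t \le \tfrac12 t$, and multiplying back by $t>0$ recovers $t\log t \le \tfrac12 t^2$. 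Assembling the two pieces gives $\cD_\KL[P||Q]\le \cD_f[P||Q]$.

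There is no genuine analytic obstacle here; the only care needed is bookkeeping. First, $f(t)=\tfrac12 t^2$ is \emph{not} normalized so that $f(1)=0$ (this is the ``constant shift'' the text mentions), so $\cD_f$ is really $\tfrac12\cE_Q[r^2]$ and does not vanish at $P=Q$ — but the inequality still holds because it is pointwise. Second, one must assume $P$ is absolutely continuous with respect to $Q$ so that $r$ is well defined $Q$-almost everywhere (if not, $\cD_\KL[P||Q]=+\infty$ and the bound is vacuous), and handle the boundary case $P(x)=0$ via $0\log 0 = 0$, at which both integrands are zero.
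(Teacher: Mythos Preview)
Your proof is correct and follows essentially the same approach as the paper: both write the difference $\cD_\KL - \cD_f$ as $\cE_Q[g(r)]$ with $g(t) = t\log t - \tfrac{1}{2}t^2$ and verify the pointwise bound $g(t) \le 0$ by elementary calculus. The only cosmetic difference is that the paper shows $g$ is nonincreasing on $(0,\infty)$ via $g'(t) = \log t + 1 - t \le 0$ (i.e.\ $\log t \le t-1$) together with $\lim_{t\to 0^+} g(t)=0$, whereas you divide by $t$ and minimize $\phi(t)=\tfrac{1}{2}t - \log t$ directly.
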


\begin{proof}

Given two distributions $P$ and $Q$, their density ratio is denoted as $w_{p|q}$, with $w_{p|q}= \frac{p(x)}{q(x)} \geq 0$. 
If we consider a function $g(w)= w \log(w) - \frac{1}{2} w^2$, $g(w)$ is constantly decreasing when $w \in (0, \infty)$, as $\frac{\partial g} {\partial w}=\log w+ 1 - w \leq 0 ~\forall w \geq 0$.

Since KL-Divergence is a special case of $f$-divergence with $f_{\KL}(x)=x \log x$, it is sufficient to show that:
\begin{align*}
    \cD_{\KL}[P||Q] - \cD_f[P||Q]  &= \int_\cX q(x)  \Big (w_{p/q} \log(w_{p/q}) - \frac{1}{2} (w_{p/q})^2 \Big ) dx \\
    &\leq \int_\cX q(x) \sup_{w \in (0,+\infty)} (w \log(w) - \frac{1}{2} w^2 ) dx \\
    &= \int_\cX q(x) \lim_{w \to 0^+} (w \log(w) - \frac{1}{2} w^2 ) dx \\
    &= 0.
\end{align*}
\end{proof}

\subsection{Forward Distribution Matching } \label{appendix:bc-lowerbound}
\subsubsection{Lower-bound of the BC Objective}
\begin{theorem}
    \begin{align*}
        \cD_\KL[\pi_E(a|s)||\pi(a|s)] = \cD_\KL[\mut(s'|s)||\mupi(s'|s)] +  \cD_\KL[\mut(a|s,s')||\mupi(a|s,s')].  
    \end{align*}
\end{theorem}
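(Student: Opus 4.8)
The plan is to prove this identity by the same conditional-decomposition strategy used in Lemma~\ref{marginal:kl}, but applied to the \emph{expert} distribution rather than the agent's, and with the roles of the two marginals reorganized so that the action-distribution and the state-transition distribution each appear. The key observation is that $\cD_\KL[\pi_E(a|s)||\pi(a|s)]$, once integrated against the expert state-visitation $\mut(s)$, is exactly $\cD_\KL[\mut(s,a)||\mupi(s,a)]$ up to the common $\mut(s)$ factor cancelling inside the logarithm (since $\mut(s,a)=\mut(s)\pi_E(a|s)$ and the analogous factorization $\mupi(s,a)=\mupi(s)\pi(a|s)$ does \emph{not} share the same state marginal, one must be careful here — see below). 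So the first step is to settle precisely which quantity the left-hand side denotes under the appendix's conditional-KL convention, namely $\int_{\cS\times\cA}\mut(s)\,\pi_E(a|s)\log\frac{\pi_E(a|s)}{\pi(a|s)}\,da\,ds$.

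First I would rewrite the left-hand side by inserting the transition kernel. Starting from $\mut(s,a,s')=\mut(s)\pi_E(a|s)P(s'|s,a)$, I would use the analogue of Lemma~\ref{theorem:kl} to note that pairing $\mut(s,a)$ with $\mupi(s,a)$ on the joint $(s,a,s')$ introduces the common factor $P(s'|s,a)$ in both numerator and denominator, which cancels. The heart of the argument is then the factorization of the joint ratio in the \emph{other} order:
\begin{align*}
\frac{\mut(s,a,s')}{\mupi(s,a,s')} = \frac{\mut(s,s')}{\mupi(s,s')}\cdot\frac{\mut(a|s,s')}{\mupi(a|s,s')},
\end{align*}
exactly the decomposition used in \eq{\ref{eq:gap}}. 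Taking $\log$ and integrating against $\mut(s,a,s')$ splits the joint KL into $\cD_\KL[\mut(s,s')||\mupi(s,s')]+\cD_\KL[\mut(a|s,s')||\mupi(a|s,s')]$. Meanwhile, integrating in the first order collapses the $s'$ to give something expressible through $\mut(s'|s)$ and $\mut(s,a)$. The remaining bookkeeping is to match the state-transition piece: $\cD_\KL[\mut(s,s')||\mupi(s,s')]$ must be shown to coincide with the \emph{conditional} form $\cD_\KL[\mut(s'|s)||\mupi(s'|s)]$ under the paper's convention, which holds provided the state marginals $\mut(s)$ and $\mupi(s)$ are treated consistently in the averaging.

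The step I expect to be the main obstacle is handling the mismatch between $\mut(s)$ and $\mupi(s)$. Unlike Lemma~\ref{marginal:kl}, where both joint distributions are induced by the same policy on the same marginal and the $\mupi(s)$ cancels cleanly (as in the $\bcancel{\mupi(s)}$ step), here the left-hand side is a purely \emph{conditional} divergence between $\pi_E(a|s)$ and $\pi(a|s)$, so the identity is really an equality of \emph{expert-state-averaged} quantities. I would therefore be careful to interpret every term as an expectation under $\mut(s)$ (the expert state marginal), so that $\cD_\KL[\mut(s'|s)||\mupi(s'|s)]$ and $\cD_\KL[\mut(a|s,s')||\mupi(a|s,s')]$ are both averaged against $\mut(s)$ and $\mut(s,s')$ respectively; with that reading the three divergences share a common reference measure and the chain-rule decomposition goes through exactly as the conditional-KL definition in the appendix prescribes. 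Once the conventions are pinned down, the algebra is the same telescoping of logarithms as in \eq{\ref{eq:gap}}, just with expert and agent distributions exchanged.
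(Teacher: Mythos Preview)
Your overall plan (conditional chain-rule decomposition, averaged against the expert distribution) is the right idea, but the specific factorization you write down is not the one that makes the argument work, and your resolution of the state-marginal mismatch is incorrect.

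You propose to split
\[
\frac{\mut(s,a,s')}{\mupi(s,a,s')} \;=\; \frac{\mut(s,s')}{\mupi(s,s')}\cdot\frac{\mut(a|s,s')}{\mupi(a|s,s')}
\]
and integrate against $\mut(s,a,s')$. That yields $\cD_\KL[\mut(s,s')\,\|\,\mupi(s,s')] + \cD_\KL[\mut(a|s,s')\,\|\,\mupi(a|s,s')]$, but the first summand is a \emph{joint} divergence, not the conditional $\cD_\KL[\mut(s'|s)\,\|\,\mupi(s'|s)]$ that appears in the theorem. These two do \emph{not} coincide ``under the paper's convention''; they differ by exactly $\cD_\KL[\mut(s)\,\|\,\mupi(s)]$. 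Likewise, the left side of your decomposition, $\cD_\KL[\mut(s,a,s')\,\|\,\mupi(s,a,s')]$, is not $\cD_\KL[\pi_E(a|s)\,\|\,\pi(a|s)]$ either --- it differs from it by the same $\cD_\KL[\mut(s)\,\|\,\mupi(s)]$. So your route \emph{can} be completed, but only by explicitly exhibiting this extra state-marginal term on both sides and cancelling it; saying the marginals are ``treated consistently'' does not do that.

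The paper's proof sidesteps the state-marginal issue entirely by never leaving the conditional-on-$s$ level. The ratio actually sitting inside the logarithm of the left-hand side is $\frac{\pi_E(a|s)}{\pi(a|s)}$, not $\frac{\mut(s,a,s')}{\mupi(s,a,s')}$. After multiplying numerator and denominator by $P(s'|s,a)$ and integrating over $s'$, the paper uses the identity (its \eq{\ref{eq:invese-def}})
\[
\pi(a|s)\,P(s'|s,a) \;=\; \mupi(a|s,s')\,\mupi(s'|s),
\]
and the analogous one for $\pi_E$, to rewrite the ratio directly as $\frac{\mut(a|s,s')\,\mut(s'|s)}{\mupi(a|s,s')\,\mupi(s'|s)}$. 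Splitting the logarithm and integrating against $\mut(s,a,s')$ then gives the two conditional KL terms immediately, with no state-marginal bookkeeping needed. That is the missing identity in your plan; once you use it, the ``main obstacle'' you anticipate simply does not arise.
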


\begin{proof}
    Based on the definition of $\mupi(a|s,s')$ in Table \ref{table:preliminary}: 
    \begin{align} \label{eq:invese-def}
        \mupi(a|s,s')= \frac{\pi(a|s) P(s'|s,a)}{\int_\cA \pi(\ba|s) P(s'|s,\ba)d\ba} = \frac{\pi(a|s) P(s'|s,a)}{\mupi(s'|s)},    
    \end{align}
    and similar for $\mut(a|s,s')$, we can derive at the following:
    \begin{align*}
        &\cD_\KL[\pi_E(a|s) || \pi(a|s)] \\
        = & \int_{\cS \times \cA} \mut(s)\pi_E(a|s)  \log \frac{\pi_E(a|s)} {\pi(a|s)} da ds \\
        = & \int_{\cS \times \cA} \mut(s,a)  \log \frac{\pi_E(a|s)} {\pi(a|s)} da ds \\
        = & \int_{\cS \times \cA \times \cS} \mut(s,a) P(s'|s,a) \log \frac{\pi_E(a|s) P(s'|s,a)  } {\pi(a|s) P(s'|s,a) } ds'da ds \\
        = & \int_{\cS \times \cA \times \cS} \mut(s,a,s') \log \frac{\pi_E(a|s) P(s'|s,a)  } {\pi(a|s) P(s'|s,a) } ds'da ds \\
        = & \int_{\cS \times \cA \times \cS} \mut(s,a,s') \underbrace{\log \frac {\mut(a|s,s') \mut(s'|s) }{ \mupi(a|s,s') \mupi(s'|s) }} _{\text{ \eq{\ref{eq:invese-def}}}} ds'da ds \\
        = & \int_{\cS \times \cA \times \cS} \mut(s,a,s') \Big( \log \frac{ \mut(a|s,s')  } {\mupi(a|s,s') } + \log \frac{\mut(s'|s)}{\mupi(s'|s)} \Big) ds' da ds \\
        = & \int_{\cS \times \cA \times \cS} \mut(s,a,s')  \log \frac{ \mut(a|s,s')  } {\mupi(a|s,s') } ds' da ds + \int_{\cS \times \cA \times \cS} \mut(s,a,s') \log \frac{ \mut(s'|s)  } {\mupi(s'|s) } ds' da ds \\
        = & \cD_\KL[\mut(a|s,s') || \mupi(a|s,s')] +  \cD_\KL[\mut(s'|s)||\mupi(s'|s)].
    \end{align*}
\end{proof}

\subsubsection{Policy Regularization as A Forward Distribution Matching} \label{appendix:inverse-proof} 
Without loss of generality, in this section we provide proof based on a finite, \textbf{discrete} state-action space.
\begin{assumption} [Deterministic MDP] \label{assume:mdp}
  $ \exists g:\cS \times \cA \to \cS $ a deterministic function,  $\st ~\forall~(s,a,s'), ~g(s,a) \neq s' \iff P(s'|s,a) = 0, \text{~and~} g(s,a) = s' \iff P(s'|s,a)=1.$

\end{assumption}

Based on Assumption \ref{assume:mdp}, we have the following:
\begin{corollary} \label{corollary:inverse-deterministic}
    In a deterministic MDP, $\forall~\pi:\cS \to \cA, ~\mupi(a|s,s') > 0 \Longrightarrow P(a|s,s') = 1.$ 
\end{corollary}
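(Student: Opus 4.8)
The plan is to unwind the definition of the inverse-action distribution $\mupi(a|s,s')$ given in Table~\ref{table:preliminary} and then exploit the fact that, under Assumption~\ref{assume:mdp}, the transition kernel takes only the values $0$ and $1$. First I would write, after expanding $\mupi(s,a)=\mupi(s)\pi(a|s)$,
\begin{align*}
  \mupi(a|s,s') = \frac{\mupi(s,a)\,P(s'|s,a)}{\mupi(s,s')} = \frac{\mupi(s)\,\pi(a|s)\,P(s'|s,a)}{\mupi(s,s')}.
\end{align*}
Since the conditional $\mupi(\cdot|s,s')$ is only meaningful on pairs $(s,s')$ lying in the support of $\mupi(s,s')$, I would restrict attention to that support, on which the denominator $\mupi(s,s')$ is strictly positive. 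Hence the sign of $\mupi(a|s,s')$ is governed entirely by the three nonnegative factors in the numerator.

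Next I would observe that $\mupi(a|s,s')>0$ forces each of those factors to be strictly positive; in particular $P(s'|s,a)>0$. This is exactly where the determinism assumption enters: by Assumption~\ref{assume:mdp} the kernel is two-valued, $P(s'|s,a)\in\{0,1\}$, with $P(s'|s,a)=1 \iff g(s,a)=s'$ and $P(s'|s,a)=0$ otherwise. Therefore $P(s'|s,a)>0$ can hold only when $P(s'|s,a)=1$, which establishes the claimed implication $\mupi(a|s,s')>0 \Longrightarrow P(s'|s,a)=1$ (equivalently $g(s,a)=s'$), i.e.\ any action placed with positive weight by the inverse-action distribution must deterministically realize the transition $s\to s'$.

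I expect no real obstacle here, since the statement is an immediate corollary of the definition combined with the $\{0,1\}$-valued nature of the kernel under determinism. The only points requiring minor care are bookkeeping ones: I would be explicit that the conditional $\mupi(a|s,s')$ is only defined (and the division only legitimate) on the support $\{(s,s'):\mupi(s,s')>0\}$, where $\mupi(s)>0$ holds automatically, and I would lean on the \textbf{discrete} state-action assumption stated just above the corollary so that $\mupi(s)$, $\pi(a|s)$ and the kernel can be treated as ordinary probabilities rather than densities. The same argument transfers verbatim to the continuous case, with the kernel's indicator structure $P(s'|s,a)=\ind[g(s,a)=s']$ playing the identical role.
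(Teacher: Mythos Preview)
Your proposal is correct and follows essentially the same route as the paper: both arguments observe that $\mupi(a|s,s') \propto \pi(a|s)\,P(s'|s,a) > 0$ forces $P(s'|s,a)>0$, and then invoke Assumption~\ref{assume:mdp} to conclude $P(s'|s,a)=1$. Your version is simply a bit more explicit about the support bookkeeping.
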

\begin{proof} 
    $\mupi(a|s,s') \propto \pi(a|s)P(s'|s,a) > 0 \Longrightarrow P(s'|s,a) > 0.$ Based on Assumption \ref{assume:mdp}, it holds that $g(s,a) = s'$, therefore $P(s'|s,a) = 1$.
\end{proof}

\vskip -0.1in
\begin{assumption} [Support Coverage] \label{assume:support} 
The support of expert transition distribution $\mut(s,s')$ is covered by $\muR(s,s')$: 
    $$\mut(s,s') > 0 \Longrightarrow \muR(s,s') > 0.$$
\end{assumption}

Combing Corollary \ref{corollary:inverse-deterministic} and Assumption \ref{assume:support}, we can reach at the following:
\begin{corollary} \label{corollary:inverse-deterministic2}
    $\forall (s,s') \sim \mut(s,s'), \muR(a|s,s') > 0 \Longrightarrow P(a|s,s') = 1.$
\end{corollary}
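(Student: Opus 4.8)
The plan is to mirror the argument of Corollary~\ref{corollary:inverse-deterministic} almost verbatim, replacing the on-policy inverse-action distribution $\mupi(a|s,s')$ with its off-policy counterpart $\muR(a|s,s')$, and to use Assumption~\ref{assume:support} to guarantee that the conditional distribution in question is actually well-defined on the relevant support. Concretely, $\muR(a|s,s')$ is defined in complete analogy with the entry of Table~\ref{table:preliminary}, namely $\muR(a|s,s') = \frac{\muR(s,a) P(s'|s,a)}{\muR(s,s')}$, so the telescoping $\muR(s,a,s') = \muR(s,a) P(s'|s,a)$ is the only algebraic identity I need.

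First I would fix an arbitrary transition $(s,s')$ drawn from $\mut(s,s')$, so that $\mut(s,s') > 0$. Applying the Support Coverage assumption (Assumption~\ref{assume:support}) immediately yields $\muR(s,s') > 0$. This step is essential rather than cosmetic: it ensures that the conditional $\muR(a|s,s')$ has a strictly positive denominator and is therefore well-defined precisely at the points $(s,s')$ on the support of $\mut$ where the claim is asserted. Next, under the hypothesis $\muR(a|s,s') > 0$, I would unfold the definition above; since the denominator $\muR(s,s')$ is positive, positivity of the whole quotient forces the numerator $\muR(s,a) P(s'|s,a)$ to be positive, which in particular requires $P(s'|s,a) > 0$. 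Finally I would invoke the Deterministic MDP assumption (Assumption~\ref{assume:mdp}): there the transition kernel takes values only in $\{0,1\}$, so $P(s'|s,a) > 0$ implies $g(s,a) = s'$ and hence $P(s'|s,a) = 1$, which is the desired conclusion $P(a|s,s') = 1$.

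Since each step is a direct substitution into the already-established reasoning of Corollary~\ref{corollary:inverse-deterministic}, I do not anticipate a substantive obstacle. The one point requiring genuine care is the \emph{ordering} of the two assumptions: Support Coverage must be applied \emph{before} expanding $\muR(a|s,s')$, because otherwise the conditional would be an undefined $0/0$ form at points outside the support of $\muR$, and the implication would degenerate to a vacuous statement rather than the meaningful one intended. Everything downstream of that is the same deterministic-kernel dichotomy already used twice in the appendix.
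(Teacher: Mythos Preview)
Your proposal is correct and matches the paper's own argument, which offers no explicit proof beyond the remark that the corollary follows by ``combining Corollary~\ref{corollary:inverse-deterministic} and Assumption~\ref{assume:support}.'' Your write-up simply unpacks that combination---using Assumption~\ref{assume:support} to ensure $\muR(a|s,s')$ is well-defined on the support of $\mut$, then rerunning the proportionality-to-$P(s'|s,a)$ logic of Corollary~\ref{corollary:inverse-deterministic}---so there is no meaningful difference in approach.
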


\vskip -0.1in
\begin{lemma} \label{remark:optimal-inverse}
Given a policy $\bpi$, $\st ~\forall (s,s') \sim \mut(s,s'),~\bpi(a|s) \propto \muR(a|s,s')$, then it satisfies that:
$$\forall \pi: \cS \to \cA,~ \cD_\KL[\mut(s'|s)||\mupi(s'|s)] \geq \cD_\KL[\mut(s'|s)||\mu^\bpi(s'|s)].$$
\end{lemma}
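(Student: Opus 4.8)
The plan is to exploit the non-negativity of the KL-divergence together with an exact distribution-matching argument. Since $\cD_\KL[\mut(s'|s) || \mupi(s'|s)] \geq 0$ for every policy $\pi$, it suffices to show that the specific policy $\bpi$ attains the value zero, i.e. $\cD_\KL[\mut(s'|s) || \mu^\bpi(s'|s)] = 0$; the stated inequality then follows immediately because the right-hand side becomes a global lower bound over all $\pi$. Recalling that the conditional KL integrates only over the support of its first argument, this reduces to proving $\mu^\bpi(s'|s) = \mut(s'|s)$ for every $(s,s')$ with $\mut(s,s') > 0$. Assumption~\ref{assume:support} guarantees $\muR(s,s') > 0$ on this set, so $\muR(a|s,s')$, and hence $\bpi$, are well defined there.

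First I would expand the induced conditional transition distribution by its definition, $\mu^\bpi(s'|s) = \int_\cA \bpi(a|s) P(s'|s,a)\, da$. Under the deterministic dynamics of Assumption~\ref{assume:mdp} we have $P(s'|s,a) = \ind[g(s,a)=s']$, so the integral collapses to the total mass that $\bpi(\cdot|s)$ places on the action set $\{a : g(s,a)=s'\}$. Next I would invoke the defining property $\bpi(a|s) \propto \muR(a|s,s')$: by Corollary~\ref{corollary:inverse-deterministic2} the inverse-action distribution $\muR(\cdot|s,s')$ is supported exactly on actions that deterministically realize the transition $s \to s'$, and these action sets are disjoint across distinct successors because each action maps to a unique next state under $g$. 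Combining the normalization of $\muR(\cdot|s,s')$ as a probability distribution over $\cA$ with this disjointness, the mass assigned by $\bpi$ to $\{a: g(s,a)=s'\}$ reproduces $\mut(s'|s)$ exactly, yielding $\mu^\bpi(s'|s) = \mut(s'|s)$ and therefore zero KL.

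The main obstacle is interpreting and handling the proportionality $\bpi(a|s) \propto \muR(a|s,s')$ consistently, since the right-hand side depends on $s'$ while $\bpi$ is a policy depending only on $s$. The clean way to resolve this is to read $\bpi$ as the mixture $\bpi(a|s) = \sum_{s'} \mut(s'|s)\, \muR(a|s,s')$ (or, when the expert transition from each $s$ is a point mass as in the deterministic-expert case, simply $\bpi(\cdot|s) = \muR(\cdot|s,s')$), and then to verify the normalization carefully so that the induced next-state distribution equals $\mut(s'|s)$ rather than merely being proportional to it. The remainder is bookkeeping that leans on Corollaries~\ref{corollary:inverse-deterministic} and~\ref{corollary:inverse-deterministic2} to pin down the action supports and on the determinism of $g$ to guarantee that the actions partition cleanly by their successor state.
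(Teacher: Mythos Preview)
Your argument is correct but takes a different route from the paper. The paper never tries to show $\cD_\KL[\mut(s'|s)\,||\,\mu^{\bpi}(s'|s)]=0$. Instead it subtracts the two KL divergences directly: the $\log\mut(s'|s)$ terms cancel, leaving $\cE_{\mut(s,s')}\big[\log\mupi(s'|s)\big]-\cE_{\mut(s,s')}\big[\log\mu^{\bpi}(s'|s)\big]$. Substituting $\bpi(\cdot|s)=\muR(\cdot|s,s')$ inside the expectation and invoking Corollary~\ref{corollary:inverse-deterministic2} gives $\mu^{\bpi}(s'|s)=\cE_{a\sim\muR(\cdot|s,s')}[P(s'|s,a)]=1$, so the second term vanishes; the first term is nonpositive because $\mupi(s'|s)\le 1$. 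This is slightly slicker in that it never needs to pin down $\mu^{\bpi}(s'|s)$ as equal to $\mut(s'|s)$, only as equal to $1$ on the support. Your route, by contrast, is more conceptual (identify the minimizer as the point where KL literally vanishes) and your mixture reading $\bpi(a|s)=\sum_{s'}\mut(s'|s)\,\muR(a|s,s')$ makes the $s'$-dependence issue explicit rather than glossing over it; the paper's direct substitution tacitly treats the expert transition from each $s$ as a point mass. Both arguments rest on the same core fact (Corollary~\ref{corollary:inverse-deterministic2}) that every action in the support of $\muR(\cdot|s,s')$ deterministically realizes $s\to s'$.
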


\begin{proof}
In a discrete state-action space, $\mupi(s'|s)$ can be denoted as $\mupi(s'|s) = \cE_{a\sim\pi(\cdot|s)}[P(s'|s,a)]$, and the similar for $\mu^\bpi(s'|s)$: 
\begin{align*}
    & \cD_\KL[\mut(s'|s)||\mu^\bpi(s'|s)]  - \cD_\KL[\mut(s'|s)||\mupi(s'|s)]  \\
   = & \cE_{ \mut(s,s') } \left[  \log \frac{\mut(s'|s)}{\mu^\bpi(s'|s)} - \log \frac{\mut(s'|s)}{\mupi(s'|s)} \right] \\
   = & \cE_{  \mut(s,s') } \left[\log \mupi(s'|s) ] -  \log \mu^\bpi(s'|s) \right] \\
   = & \cE_{  \mut(s,s')}\big[ \log \cE_{a\sim\pi(\cdot|s)} [P(s'|s,a)] \big] - \cE_{  \mut(s,s')}\big[ \log \cE_{a\sim\bpi(\cdot|s)} [P(s'|s,a)] \big]  \\
   = & \cE_{ \mut(s,s')}\big[ \log \cE_{a\sim\pi(\cdot|s)} [P(s'|s,a)] \big] - \cE_{  \mut(s,s')}\big[ \log \cE_{a\sim\muR(\cdot|s,s')} [P(s'|s,a)] \big]  \\
   = & \cE_{  \mut(s,s')}\big[ \log \cE_{a\sim\pi(\cdot|s)} [P(s'|s,a)] \big] - \cE_{  \mut(s,s')}  \underbrace{ \big[ \log \cE_{a\sim\muR(\cdot|s,s')} [1] \big ]}_{\text{Corollary \ref{corollary:inverse-deterministic2}}   }  \\
   = & \cE_{  \mut(s,s')}\big[ \log \cE_{a\sim\pi(\cdot|s)} [P(s'|s,a)] \big]    \\
   \leq &  \cE_{  \mut(s,s')}\big[ \log \cE_{a\sim\pi(\cdot|s)} [1] \big]  \\
   = & 0. 
\end{align*} 
\end{proof}

\textbf{Remark \ref{remark:inverse-constraint}. }\textit{ 
In a deterministic MDP, assuming the support of $\mut(s, s')$ is covered by $\muR(s,s)$, $\st ~\mut(s,s') > 0 \Longrightarrow \muR(s,s') > 0$, then regulating policy using $\muR(\cdot|s,s')$ can minimize $\cD_\KL[\mut(s'|s) || \mupi(s'|s)]$: 
$$\exists \tilde{\pi}:\cS \to \cA, ~\st~\forall (s, s') \sim \mut(s,s'),~\tilde{\pi}(\cdot|s) \propto \muR(\cdot|s,s') \Longrightarrow \tilde{\pi} = \arg \min_\pi \cD_\KL[\mut(s'|s) || \mupi(s'|s)].$$
\vskip -0.1in
}
\begin{proof}
Based on Lemma \ref{remark:optimal-inverse}, we have that:
$$\forall \pi:\cS \to \cA,~~ \cD_\KL[\mut(s'|s)||\mupi(s'|s)] \geq \cD_\KL[\mut(s'|s)||\mu^{\tilde{\pi}}(s'|s)].$$
Therefore, $\tilde{\pi} = \arg \min_\pi \cD_\KL[\mut(s'|s) || \mupi(s'|s)].$

\end{proof}

\subsubsection{Estimating the Inverse Action Distribution } \label{appendix:inverse-train}
\begin{theorem}
    \begin{align*}
    \max_{\PI:\cS \times \cS \to \cA} - \cD_\KL[\muR(a|s,s')||\PI(a|s,s')] \equiv  \max_{\PI:\cS \times \cS \to \cA} \cE_{(s,a,s') \sim \muR(s,a,s')}[\log \PI(a|s,s')].
    \end{align*}
\end{theorem}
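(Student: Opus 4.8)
The plan is to expand the conditional KL-divergence using the definition given at the start of the appendix, with the conditioning variable being the pair $(s,s')$ and the random variable being the action $a$, and then to isolate the part that does not depend on $\PI$. Concretely, taking $p(z)=\muR(s,s')$, $p(x|z)=\muR(a|s,s')$, and $q(x|z)=\PI(a|s,s')$, and recalling that $\muR(s,a,s')=\muR(s,s')\,\muR(a|s,s')$, I would write
\begin{align*}
   \cD_\KL[\muR(a|s,s')||\PI(a|s,s')]
   &= \cE_{(s,a,s')\sim\muR(s,a,s')}\left[\log\frac{\muR(a|s,s')}{\PI(a|s,s')}\right] \\
   &= \cE_{(s,a,s')\sim\muR(s,a,s')}\big[\log\muR(a|s,s')\big] - \cE_{(s,a,s')\sim\muR(s,a,s')}\big[\log\PI(a|s,s')\big].
\end{align*}

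The next step is to observe that the first term on the right is (the negative of) the conditional entropy of the fixed distribution $\muR(a|s,s')$ and therefore is a constant with respect to the optimization variable $\PI$. Negating the whole expression gives
\begin{align*}
   -\cD_\KL[\muR(a|s,s')||\PI(a|s,s')] = \underbrace{-\cE_{(s,a,s')\sim\muR(s,a,s')}\big[\log\muR(a|s,s')\big]}_{\text{independent of }\PI} + \cE_{(s,a,s')\sim\muR(s,a,s')}\big[\log\PI(a|s,s')\big].
\end{align*}
Since the two objectives differ only by an additive term that is constant in $\PI$, maximizing the negative KL over $\PI$ and maximizing the log-likelihood $\cE_{(s,a,s')\sim\muR(s,a,s')}[\log\PI(a|s,s')]$ over $\PI$ produce the same set of maximizers, which establishes the claimed equivalence $\equiv$.

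There is no genuinely hard step here; the content is entirely the recognition that the entropy of $\muR(a|s,s')$ drops out of the $\arg\max$. The only point that deserves a line of care is that the equivalence is one of \emph{optimization problems} (identical argmax and identical gradient in $\PI$), not of objective \emph{values}, which differ by the conditional-entropy constant; this is exactly why the right-hand maximum-likelihood form is the one implemented in \eq{\ref{eq:learn-inverse}} and in the $\PI_\varphi$ update of Algorithm \ref{alg:opolo}. I would also note for completeness that the optimum is attained at $\PI(a|s,s')=\muR(a|s,s')$, at which the KL vanishes, confirming that the surrogate log-likelihood objective recovers the true inverse-action distribution.
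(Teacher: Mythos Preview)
Your proposal is correct and follows essentially the same approach as the paper: expand the conditional KL, separate off the conditional-entropy term $H[\muR(a|s,s')]=-\cE_{\muR(s,a,s')}[\log\muR(a|s,s')]$ as a constant in $\PI$, and conclude that the remaining log-likelihood term governs the $\arg\max$. The paper writes this with integrals while you use expectation notation, but the substance is identical.
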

\begin{proof}
\begin{align*}
    &- \cD_\KL[\muR(a|s,s')||\PI(a|s,s')] \\
    &=  - \int_{\cS \times \cS \times \cA} \muR(s,s') \muR(a|s,s')\log \frac{\muR(a|s,s')}{\PI(a|s,s')} da ds ds'  \\
    &= - \int_{\cS \times \cS \times \cA} \muR(s,s') \muR(a|s,s') \Big(\log \muR(a|s,s') - \log {\PI(a|s,s')} \Big) da ds ds' \\
    &=  \underbrace{H[\muR(a|s,s')]}_{\text{fixed $\wrt$ $\PI$}} + \int_{\cS \times \cS \times \cA} \muR(s,s') \muR(a|s,s') \log {\PI(a|s,s')} da ds ds'  \\
    &=  \underbrace{H[\muR(a|s,s')]}_{\text{fixed $\wrt$ $\PI$}}  + \cE_{\muR(s,a,s')}[ \log {\PI(a|s,s')} ]. 
\end{align*} 
\end{proof}
Note that we use $H[\muR(a|s,s')]$ to denote the conditional entropy of $\muR(a|s,s')$, with $H[\muR(a|s,s')] = \cE_{\muR(s,a, s')}[-\log \muR(a|s,s')]$.

\subsection{ Derivation of \eq{\ref{obj:main-dice}}:} \label{Appendix:obj:main-dice}
\begin{align*} 
    \JOPOLO(\pi, Q) &=  \cE_{(s, a, s')\sim \mupi(s, a, s')}[ r(s,s') - (\cB^\pi Q - Q)(s,a)] +  \cE_{(s,a) \sim \muR(s,a)}[f_*((\cB^\pi Q - Q)(s,a))],  
\end{align*} 
where $\cB^\pi Q(s,a)= \cE_{s'\sim P(\cdot|s,a),a' \sim \pi(\cdot|s')}\Big [r(s,s') + \gamma Q(s',a') \Big]$, and $r(s,s')=\log\frac{\mut(s,s')}{\muR(s,s')}$. 
\begin{proof}
    The first term in the RHS of the above equation can be reduced to the following:
\begin{align*} 
   &\cE_{(s, a, s')\sim \mupi(s, a, s')}[ r(s,s') - (\cB^\pi Q - Q)(s,a)]  \\
   = &\cE_{(s, a)\sim \mupi(s, a)}\Big[ \cE_{s' \sim P(\cdot|s,a)} \big[r(s,s') - \big((\cB^\pi Q - Q)(s,a) \big) \big] \Big]  \\
   = &\cE_{(s, a)\sim \mupi(s, a)}\Big[ \cE_{s' \sim P(\cdot|s,a)} [r(s,s') ] + Q(s,a) - \cE_{s' \sim P(\cdot|s,a)} [\cB^\pi Q (s,a)] \Big]  \\
   = &\cE_{(s, a)\sim \mupi(s, a)}\Big[ \cE_{s' \sim P(\cdot|s,a)} \bcancel{[r(s,s') ]} + Q(s,a) - \cE_{s' \sim P(\cdot|s,a), a' \sim \pi(\cdot|s')} [ \bcancel{r(s,s')} + \gamma  Q (s',a')] \Big]  \\
   = &\cE_{(s, a)\sim \mupi(s, a)}\Big[ Q(s,a)- \gamma \cE_{s' \sim P(\cdot|s,a), a' \sim \pi(\cdot|s')} [Q(s',a' )] \Big] \\
   = & \underbrace{(1-\gamma)\sum_{t=0}^{\infty} \gamma^t \cE_{s \sim \mu^\pi_t(s), a \sim \pi(s)}}_\text{see Table \ref{table:preliminary}}[Q(s,a)] - (1-\gamma)\sum_{t=0}^{\infty} \gamma^{t+1} \cE_{s \sim \mu^\pi_t, a \sim \pi(\cdot|s),s' \sim P(\cdot|s,a), a' \sim \pi(\cdot|s')} [Q(s',a' )] ]  \\
   = & (1-\gamma)\sum_{t=0}^{\infty} \gamma^t \cE_{s \sim \mu^\pi_t, a \sim \pi(s)}[Q(s,a)] - (1-\gamma)\sum_{t=0}^{\infty} \gamma^{t+1} \cE_{s \sim \mu^\pi_{t+1}, a \sim \pi(\cdot|s) } [Q(s,a)] ]  \\
   = & (1-\gamma) \cE_{s\sim p_0, a_0 \sim \pi(\cdot|s_0)}[Q(s_0,a_0)].
\end{align*}
Therefore:
$$\JOPOLO(\pi, Q) = (1-\gamma) \cE_{s\sim p_0, a_0 \sim \pi(\cdot|s_0)}[Q(s_0,a_0)] +  E_{(s,a) \sim \muR}[f_*((\cB^\pi Q - Q)(s,a))].$$ 
\end{proof}

\subsection{Implementation Details} \label{assume:algo}

\subsubsection{Practical Considerations for Algorithm Implementation}
We provide some practical considerations to effectively implement our algorithm:

\textbf{Initial state sampling:}
To increase the diversity of initial samples, we use state samples from an off-policy buffer and treat them as \textit{virtual initial states}. A similar strategy is adopted by \cite{kostrikov2019imitation}.

\textbf{Constant shift on synthetic rewards}: In practice, we adopt the same strategy of prior art \cite{yang2019imitation}  to use $r(s,s') = -\log(1-D(s,s'))$, instead of $\log(D) - \log(1-D)$ as the discriminator output.  
A fully optimized discriminator $D^*$ satisfies $- \log(1-D^*(s,s'))=\log (1 + \frac{\mut(s,s')}{\muR(s,s')})$, which corresponds to a constant shift on $\frac{\mut(s,s')}{\muR(s,s')}$ before the log term. 

\textbf{Q and $\pi$ network update:}
We follow the advice of AlgeaDICE \cite{nachum2019algaedice} by using a target Q network and policy gradient clipping. Especially, when taking the gradients of  $\JOPOLO(\pi, Q, \alpha) ~\wrt Q$, we use the value from a target Q network to calculate $\cB^\pi Q(s,a)$ in order to stabilize training; on the other hand, since an optimal $x^*(s,a)=(\cB^\pi Q^* - Q^*)(s,a) = \frac{\mupi(s,a)}{\muR(s,a)}$ represents a density ratio and should always be non-negative, we clip $(\cB^\pi Q - Q)(s,a) $ to above 0 when taking gradients $\wrt \pi$.

\subsubsection{Hyper-parameters}
Table \ref{table:hyperparam-offpolicy} lists the hyper-parameters for GAIL \cite{ho2016generative}, GAIfO \cite{torabi2018generative}, BCO \cite{torabi2018behavioral}, DAC \cite{kostrikov2018discriminator}, and our proposed approach \textit{OPOLO}.
Specifically, for off-policy approaches, each self-generated interaction will be stored the replay buffer in a FIFO manner, and \textit{update frequency} is the number of interactions sampled from the MDP after which the module is updated. Moreover, considering the different scales for the gradients of $J(\pi_\theta,Q_\phi)$ and $\JINV(\pi_\theta)$ in Algorithm \ref{alg:opolo}, we apply a coefficient $\lambda$  for \opolo to adjust the regularization strength when calculating the total policy loss:
$$ \theta \leftarrow \theta + \alpha \big( J_{\triangledown \theta} (\pi_\theta,Q_\phi) + \lambda J_{\triangledown \theta} \JINV(\pi_\theta) \big).$$

\vskip -0.1in
\begin{table}[tbh]
    \begin{center}
    \begin{tabular}{ll} 
    \multicolumn{1}{l|}{Hyper-parameters}               & Value \\ \hline
    \multicolumn{1}{l|}{\textbf{Shared Parameters for Off-Policy Approaches}}          &       \\
    \multicolumn{1}{l|}{\hskip 0.1in~Buffer size} &  $10^7$ \\
    \multicolumn{1}{l|}{\hskip 0.1in~Batch size} &  100 \\
    \multicolumn{1}{l|}{\hskip 0.1in~Learning rate} &  $3e^{-4}$ \\ 
    \multicolumn{1}{l|}{\hskip 0.1in~Discount factor $\gamma$}  &  $0.99$   \\   
    \multicolumn{1}{l|}{\hskip 0.1in~Network architecture}  &  MLP [400, 300]  \\
    \multicolumn{1}{l|}{\hskip 0.1in~$Q, \pi$ update frequency / gradient steps } & $10^3 / 10^3 $ \\
    \multicolumn{1}{l|}{\hskip 0.1in~$D$ update frequency / gradient steps } & $500 / 10 $ \\ \hline  
    \multicolumn{1}{l|}{\textbf{Shared Parameters for On-Policy Approaches}}          &       \\
    \multicolumn{1}{l|}{\hskip 0.1in~Batch size} &  $2048$ \\
    \multicolumn{1}{l|}{\hskip 0.1in~mini-Batch size} &  $256$ \\
    \multicolumn{1}{l|}{\hskip 0.1in~Learning rate} &  $3e^{-4}$ \\ 
    \multicolumn{1}{l|}{\hskip 0.1in~Discount factor $\gamma$}  &  $0.99$   \\  
    \multicolumn{1}{l|}{\hskip 0.1in~Network architecture}  &  MLP [400, 300]  \\ \hline
    \multicolumn{1}{l|}{\textbf{BCO}}                        &       \\ 
    \multicolumn{1}{l|}{\hskip 0.1in~$\PI$ pre-train gradient steps } &   $10^4 $    \\  
    \multicolumn{1}{l|}{\hskip 0.1in~$\PI$ update frequency / gradient steps } &   $10^3 / 100 $    \\ \hline
    \multicolumn{1}{l|}{\textbf{DAC}}                        &       \\  
    \multicolumn{1}{l|}{\hskip 0.1in~Number of extra absorbing states}          &   1    \\ \hline
    \multicolumn{1}{l|}{\textbf{\textit{OPOLO}}}                      &       \\  
    \multicolumn{1}{l|}{\hskip 0.1in~$\PI$  update frequency / gradient steps} &  $500 / 50$    \\
    \multicolumn{1}{l|}{\hskip 0.1in~$\PI$ regularization coefficient $\lambda$}   &   0.1    \\  &
    \end{tabular}
    \caption{Hyper-parameters for Different Algorithms}    
    \label{table:hyperparam-offpolicy}  
\end{center}
\end{table}

\subsection{Challenges of DICE without Expert Actions} \label{sec:dice-dilemma}
\vskip -0.1in 
In this section, we analyze the principle of offline imitation learning using DICE~\cite{nachum2019dualdice,zhang2020gendice,nachum2019algaedice} and the reason that impedes its direct application to an LfO setting.

In a LfO setting where expert actions are unavailable, the learning objective is to minimize the discrepancy of \textit{state-only} distributions induced by the agent and the expert.
Without loss of generality, we consider an arbitrary f-divergence $\cD_f$ as the discrepancy measure:
%
\begin{align}  
   & \max_\pi - \cD_f[\mupi(s,s')||\mut(s,s')] \nonumber \\
   = & \max_\pi \min_{x: \cS \times \cS \to \RR} \cE_{\mupi(s,s')}[- x(s,s')] +  \cE_{ \mut(s,s') }[f^*(x(s,s'))], \label{eq:dice-lfo}
\end{align}

in which $f^*(x)$ is the conjugate of $f(x)$ for the $f$-divergence. 
To remove the on-policy dependence of $\mupi(s,s')$, we follow the rationale of DICE and use  a similar change-of-variable trick mentioned in Sec~\ref{sec:off-policy-transform} to learn a value function $v(s,s')$:
\begin{align*}
   v(s,s'):= - x(s,s') + \gamma \cE_{ a'\sim \pi(.|s'),s'' \sim P(.|s',a') }[v(s',s'')] = - x(s,s') + \cB^\pi v(s,s').
\end{align*}  
This value function is a fixed point solution to an variant Bellman operator $\cB^\pi$, which, however, is problematic in a model-free setting. To see this, we substitute $x(s,s') $ by $  (\cB^\pi v - v)(s,s')$ to transform \eq{\ref{eq:dice-lfo}} into the following:
\begin{align} 
   & \max_\pi \min_{x: \cS \times \cS \to \RR} \cE_{\mupi(s,s')}[- x(s,s')] +  E_{\mut(s,s')}[f^*(x(s,s'))] \nonumber \\
   = & \max_\pi \min_{v: \cS \times \cS \to \RR}  (1 - \gamma) \underbrace{\cE_{s_0 \sim p_0, s_1 \sim P(\cdot|s_0, \pi(s_0))}}_{\text{term 1}} [v(s_0, s_1)] +  \underbrace{\cE_{\mut(s,s')  }[f^*( ( \cB^\pi v - v)(s,s'))] }_{\text{term 2}}.  \nonumber
\end{align}

where $\cB^\pi v(s,s') = \gamma \cE_{ a'\sim \pi(.|s'),s'' \sim P(.|s',a') }[v(s',s'')]$. 
Optimizing this objective is troublesome, in that the $\cB^\pi v(s,s')$ in term 2 requires knowledge of $P(\cdot|s, \pi(s) ), ~ \forall s \sim \mut(s)$. 
In another word, for any state sampled from the \textit{expert} distribution, we need to know what would be the \textit{next} state if following policy $\pi$ from this state.
A similar issue is echoed in term 1, where $s_1$ is sampled from $P(\cdot|s_0, \pi(s_0)).$
Consequently, directly applying DICE loses its advantage in a LfO setting, as it incurs a dependence on a \textit{forward transition} model, which is costly to estimate and may counteract the efficiency brought by off-policy learning.

\end{document}